
\documentclass[]{article}

\usepackage{authblk} 

\usepackage{proceed}
 


\usepackage{times}

\usepackage{amsthm}
\usepackage{graphicx}
\usepackage{fix-cm}
\usepackage{amsmath}
\usepackage{amssymb}
\usepackage{paralist}
\usepackage{algorithmic}
\usepackage{algorithm}
\usepackage{color}

\newtheorem{thm}{Theorem}

\newtheorem{cor}{Corollary}
\newtheorem{lem}{Lemma}

\def \R {\mathbb{R}}
\def \D {\mathcal{D}}
\def \y {\mathbf{y}}

\def \E {\mathrm{E}}
\def \x {\mathbf{x}}
\def \y {\mathbf{y}}

\def \P {\mathcal{P}}
\def \c {\mathbf{c}}

\def \z {\mathbf{z}}

\def \xh {\widehat{\x}}

\def \u {\mathbf{u}}

\def \v {\mathbf{v}}

\def \B {\mathcal{B}}

\def \g {\mathbf{g}}

\def \B {\mathcal{B}}

\def \xt {\widetilde{\x}}

\def \F {\mathcal{F}}

\title{Optimal Stochastic Strongly Convex Optimization\\ with a Logarithmic Number of Projections}

\author[1]{Jianhui Chen}
\author[2]{Tianbao Yang}
\author[3]{Qihang Lin}
\author[4]{Lijun Zhang}
\author[1]{Yi Chang}
\affil[1]{Yahoo Research, Sunnyvale, CA 94089, USA}
\affil[2]{Department of Computer Science, The University of Iowa, Iowa City, IA 52242, USA}
\affil[3]{Department of Management Sciences, The University of Iowa, Iowa City, IA 52242, USA}
\affil[4]{National Key Laboratory for Novel Software Technology, Nanjing University, Nanjing 210023, China}

%

\begin{document}

\maketitle

\begin{abstract}

We consider stochastic strongly convex optimization with a complex inequality constraint. This complex inequality constraint may lead to computationally expensive projections in algorithmic iterations of the stochastic gradient descent~(SGD) methods. To reduce the computation costs pertaining to the projections,  
we propose an Epoch-Projection Stochastic Gradient Descent~(Epro-SGD) method. The proposed Epro-SGD method consists of a sequence of epochs; it applies SGD to an augmented objective function at each iteration within the epoch, and then performs a projection at the end of each epoch.  
Given a strongly convex optimization and for a total number of $T$ iterations, Epro-SGD requires only $\log(T)$ projections, and meanwhile attains an optimal convergence rate of $O(1/T)$, both in expectation and with a high probability. To exploit the structure of the optimization problem, we propose a proximal variant of Epro-SGD, namely Epro-ORDA, based on the optimal regularized dual averaging method. We apply the proposed methods on real-world applications; the empirical results demonstrate the effectiveness of our methods.


\end{abstract}

\section{INTRODUCTION}  \label{intro} 

Recent years have witnessed an increased interest in adopting the stochastic (sub)gradient (SGD) methods~\cite{DBLP:conf/nips/BachM11,citeulike:9912332,Nemirovski:2009:RSA:1654243.1654247} for solving large-scale machine learning  problems. In each of the algorithmic iterations, SGD reduces the computation cost by sampling one (or a small number of) example for computing a stochastic (sub)gradient. Thus the computation cost in SGD is independent of the size of the data available for training; this property makes SGD appealing for large-scale optimization. However, when the optimization problems involve a complex domain (for example a positive definite constraint or a polyhedron one), the projection operation in each iteration of SGD, which is used to ensure the feasibility of the intermediate solutions, may become the computational bottleneck.

In this paper we consider to solve the following constrained optimization problem
\begin{equation}
\begin{aligned}\label{eqn:obj0}
&\min_{\x\in\R^d}\quad f(\x)\\
&s.t. \quad c(\x)\leq 0,
\end{aligned}
\end{equation}
where $f(\x)$ is $\beta$-strongly convex~\cite{Nesterov-opt-course} and $c(\x)$ is convex. We assume a stochastic access model for $f(\cdot)$, in which the only access to $f(\cdot)$ is via a stochastic gradient oracle; in other words, given arbitrary $\x$, this stochastic gradient oracle produces a random vector $\g(\x)$, whose expectation is a subgradient of $f(\cdot)$ at the point $\x$, i.e., $\E[\g(\x)] \in \partial f(\x)$, where $\partial f(\x)$ denotes the subdifferential set of $f(\cdot)$ at $\x$. On the other hand we have the full access to the (sub)gradient of $c(\cdot)$. 

The standard SGD method~\cite{Boyd:2004:CO:993483} solves Eq.~(\ref{eqn:obj0}) by iterating the updates in Eq.~(\ref{eqn:sgd}) with an appropriate step size  $\eta_t$, e.g., $\eta_t = 1/(\beta t)$), as below
\begin{align}\label{eqn:sgd}
\x_{t+1} = \P_{\{\x\in\R^d: c(\x)\leq 0\}}\left[\x_{t} - \eta_t\g(\x_{t})\right],
\end{align}
and then returning $\widehat\x_T = \sum_{t=1}^T\x_t /T$ as the final solution for a total number of iterations $T$. Note that $\P_\D[\widehat\x]$ is a projection operator defined as 
\begin{align}
\P_\D[\widehat\x]=\arg\min_{\x\in\D}\|\x - \widehat \x\|_2^2.
\end{align} 
If the involved constraint function $c(\x)$ is complex (e.g., a polyhedral or a positive definite constraint), computing the associated projection may be computationally expensive; for example, a projection onto a positive definite cone over $\R^{d\times d}$ requires a full singular value decomposition (SVD) operation with time complexity of $O(d^3)$.


In this paper, we propose an epoch-based SGD method, called Epro-SGD, which requires only a logarithmic number of projections~(onto the feasible set), and meanwhile achieves an optimal convergence rate for stochastic strongly convex optimization. Specifically, the proposed Epro-SGD method consists of a sequence of epochs; within each of the epochs, the standard SGD is applied to optimize a composite objective function augmented by the complex constraint function, hence avoiding the expensive projections steps; at the end of every epoch, a projection operation is performed to ensure the feasibility of the intermediate solution. Our analysis shows that given a strongly convex optimization and for a total number of $T$ iterations, Epro-SGD requires only $\log(T)$ projections, and meanwhile achieves an optimal rate of convergence at $O(1/T)$, both in expectation and with a high probability. 

To exploit the structure (for example the sparisty) of the optimization problem, we propose a proximal variant of the Epro-SGD method, namely Epro-ORDA, which utilizes an existing optimal dual averaging method to solve the involved proximal mapping. Our analysis shows that Epro-ORDA similarly requires only a logarithmic number of projections while enjoys an optimal rate of convergence.

For illustration we apply the proposed Epro-SGD methods on two real-world applications, i.e., the constrained Lasso formulation and the large margin nearest neighbor~(LMNN) classification. Our experimental results demonstrate the efficiency of the proposed methods, in comparison to the existing methods.

\section{RELATED WORK}\label{sec:rw}

The present work is inspired from the break-through work in~\cite{DBLP:conf/nips/MahdaviYJZY12}, which proposed two novel one-projection-based stochastic gradient descent~(OneProj) methods for stochastic convex optimizations. Specifically the first OneProj method was developed for general convex optimization; it introduces a regularized Lagrangian function as
\begin{equation*}
L(\x, \lambda)   = f(\x) + \lambda c(\x) - \frac{\gamma}{2}\lambda^2, \quad \lambda\geq 0,
\end{equation*} 
then applies SGD to the convex-concave problem $\min_{\x\in\B}\max_{\lambda\geq 0}L(\x, \lambda)$, and finally performs only one projection at the end of all iterations, where $\B$ is a bounded ball subsuming $\mathcal F=\{\x\in\R^d: c(\x)\leq 0\}$ as a subset. 

The second OneProj method was developed for strongly convex optimization. The proposed method introduced an augmented objective function 
\begin{equation}\label{eqn:F}
F(\x) = f(\x) + \gamma \ln\left(1 + \exp\left(\frac{\lambda c(\x)}{\gamma}\right)\right),
\end{equation}
where $\gamma$ is a parameter dependent on the total number of iterations $T$, and $\lambda$ is a problem specific parameter~\cite{DBLP:conf/nips/MahdaviYJZY12}. OneProj applies SGD to the augmented objective function, specifically using a stochastic subgradient of $f(\x)$ and a subgradient of $c(\x)$, and then performs a projection step after all iterations. For a total number $T$ iterations, the OneProj method achieves a rate of convergence at $O(\log T/(\beta T))$, which is suboptimal for stochastic strongly convex optimization. 

Several recent works~\cite{DBLP:journals/jmlr/HazanK11a,conf/icml/RakhlinSS12} propose optimal methods with optimal rates of convergence at $O(1/T)$ for stochastic strongly convex optimization. In particular, the Epoch-SGD method~\cite{DBLP:journals/jmlr/HazanK11a} consists of a sequence of epochs, each of which has a geometrically decreasing step size and a geometrically increasing iteration number. This method however needs to project the intermediate solutions onto a feasible set at every algorithmic iteration; when the involved constraint is complex, the involved projection is usually computationally expensive. This limitation restricts the practical applications on large scale data analysis. Therefore we are motivated to develop an optimal stochastic algorithm for strongly convex optimization but with a constant number of projections. 

Another closely related work is the logT-SGD~\cite{logTSGD} for stochastic strongly convex and smooth optimization. LogT-SGD achieves an optimal rate of convergence, while requires to perform $O(\kappa \log_2  T)$ projections, where $\kappa$ is the ratio of the smoothness parameter to the strong convexity parameter. There are several key differences between our proposed Epro-SGD method and logT-SGD:~(i)~logT-SGD and its analysis rely on both the smoothness and the strong convexity of the objective function; in contrast, Epro-SGD only assumes that the objective function is strongly convex; (ii) the number of the required projections in logT-SGD is $O(\kappa \log_2 T)$, where  the conditional number $\kappa$ can be very large in real applications; in contrast, Epro-SGD requires at most $\log_2T$ projections. 

Besides reducing the number of projections in SGD, another line of research is based on the conditional gradient algorithms~\cite{DBLP:journals/talg/Clarkson10,Hazan:2008:SAS:1792918.1792945,thesisMJ,citeulike:11703925,Ying:2012:DML:2188385.2188386}; this type of algorithms mostly build upon the Frank-Wolfe technique~\cite{frank56}, which eschews the projection in favor of a linear optimization step; however in general, they require the smoothness assumption in the objective function. On the other hand,~\cite{arXiv:1301.4666,Hazan-free} extended Frank-Wolfe techniques to stochastic or online setting for general and strongly convex optimizations. Specifically~\cite{Hazan-free} presents an online/stochastic Frank-Wolfe (OFW) algorithm with a convergence rate $O(1/T^{1/3})$ for general convex optimization problems, which is slower than the optimal rate $O(1/\sqrt{T})$.~\cite{arXiv:1301.4666} presents an algorithm for online strongly convex optimization with an $O(\log T)$ regret bound, implying an $O(\log T/T)$ convergence rate for stochastic stronlgy convex optimization. This algorithm requires the problem domain to be a polytope, instead of a convex inequality constraint used in this paper; it also hinges on an efficient local linear optimization oracle that amounts to approximately solving a linear optimization problem over an intersection of a ball and and the feasible domain; furthermore the convergence result only holds in expectation and is sub-optimal.

\section{EPOCH-PROJECTION SGD ALGORITHM}

In this section, we present an epoch-projection SGD method, called Epro-SGD, for solving Eq.~(\ref{eqn:obj0}) and discuss its convergence result. Based on a stochastic dual averaging algorithm, we then present a proximal variant of the proposed Epro-SGD method. 

\subsection{SETUP AND BACKGROUND}

Denote the optimal solution to Eq.~(\ref{eqn:obj0}) by $\x_*$ and its domain set by $\mathcal D=\{\x\in\R^d:  c(\x)\leq 0\}$. Since $f(\x)$ is $\beta$-strongly convex~\cite{Nesterov-opt-course} and $c(\x)$ is convex, the optimization problem in Eq.~(\ref{eqn:obj0}) is strongly convex.
Note that the strong convexity in $f(\cdot)$ implies that $f(\x)\geq f(\x_*) + (\beta/2)\|\x-\x_*\|^2$ for any $\x$. Our analysis is based on the following assumptions:
\begin{enumerate}
\item[A1.] The stochastic subgradient $\g(\x)$ is uniformly bounded  by $G_1$, i.e., $\|\g(\x)\|_2\leq G_1$.
\item[A2.] The subgradient $\partial c(\x)$ is uniformly bounded by $G_2$, i.e., $\|\partial c(\x)\|_2\leq G_2$ for any $\x$.
\item[A3.] There exists a positive value $\rho>0$ such that 
\begin{eqnarray}
\left[ \min_{c(\x)=0, \, \v \in \partial c(\x), \, \v \neq 0} \|\v\|_2 \right] \geq \rho.
\end{eqnarray} 
\end{enumerate}
{\noindent \bf Remarks}~Assumptions A1 and A2 respectively impose an upper bound on the stochastic subgradient of the objective function $f(\cdot)$ and the constraint function $c(\cdot)$. Assumption A3 ensures that the projection of a point onto a feasible domain does not deviate too much from this intermediate point. Note that Assumption A1 is previously used in~\cite{DBLP:journals/jmlr/HazanK11a}; a condition similar to Assumption A3 is used in~\cite{DBLP:conf/nips/MahdaviYJZY12}, which however simply assumes that $\min_{c(\x)=0}\|\nabla c(\x)\|_2\geq \rho$, without considering possible non-differentiability in $c(\cdot)$. 

A key consequence of Assumption A3 is presented in the following lemma. 
\begin{lem}\label{lem:0}
For any $\widehat\x$, let $\widetilde\x = \arg\min_{c(\x)\leq 0}\|\x- \widehat\x\|_2^2$. If Assumption A3 holds,  then 
\begin{align}\label{eqn:keyi}
\|\xh - \xt\|_2\leq \frac{1}{\rho} [c(\xh )]_+, \quad \rho>0,
\end{align}
where $[s]_+$ is a hinge operator defined as $[s]_+=s$ if $s\geq 0$, and $[s]_+=0$ otherwise. 
\end{lem}
\begin{proof}
If $c(\xh_T)\leq 0$, we have $\widehat \x = \widetilde \x$; the inequality in Eq.~(\ref{eqn:keyi}) trivially holds. If $c(\xh_T)>0$, we can verify that $c(\xt_T)=0$, and there exists $s\geq 0$ and $\v\in\partial c(\xt_T)$ such that $\xt_T - \xh_T  + s \v =0$~(using duality theory). It follows that $\xh_T - \xt_T = s\v$~($\v\neq 0$), and thus $\xh_T - \xt_T$ is the same direction as $\v$.
It follows that 
\begin{eqnarray*}
c(\xh_T) & =  & c(\xh_T) - c(\xt_T) \geq (\xh_T - \xt_T)^{\top}\v  \\
              & =  & \|\v\|_2 \|\xh_T - \xt_T\|_2 \geq \rho \|\xh_T - \xt_T\|_2,
\end{eqnarray*}
where the last inequality uses Assumption A3. This completes the proof of this lemma.  
\end{proof}
The result in Lemma~\ref{lem:0} is closely related to {\it the polyhedral error bound condition}~\cite{DBLP:journals/mp/GilpinPS12,DBLP:journals/corr/arXiv:1512.03107}; this condition shows that the distance of a point to the optimal set of a convex optimization problem whose epigraph is a polyhedron  is bounded by the distance of the objective value at this point to the optimal objective value scaled by a constant. For illustration, we consider the optimization problem 
\begin{align*}
\min_{\x\in\R^d} \,\, [c(\x)]_+
\end{align*}
with an optimal set as $\{\x\in\R^d: c(\x)\leq 0\}$. If $c(\xh)>0$, $\xt =  \arg\min_{c(\x)\leq 0}\|\x- \widehat\x\|_2^2$ is the closest point in the optimal set to $\xh$. Therefore, by {\it the polyhedral error bound condition} of a polyhedral convex optimization, if $c(\x)$ is a polyhedral function, there exists a $\rho>0$ such that
\begin{equation*}
\|\xh - \xt\|_2\leq \frac{1}{\rho} \left( [c(\xh)]_+  - \min_\x [c(\x)]_+\right) = \frac{1}{\rho}[c(\xh)]_+.
\end{equation*}
Below we present three examples in which Assumption A3 or Lemma~\ref{lem:0} is satisfied. Example $1$: an affine function $c(\x) = \c^{\top}\x - b$ with $\rho = \|\c\|_2$. Example $2$: the $\ell_1$ norm constraint $\|\x\|_1\leq B$ where $\rho = \min_{\x: \|\x\|_1 =B}\|\partial \|\x\|_1\|_2\geq 1$.  Example $3$: the maximum  of a finite number of affine functions $c(\x) = \max_{1\leq i\leq m}\c_i^{\top}\x - b_i$ satisfying Lemma~\ref{lem:0} as well as the polyhedral error bound condition~\cite{DBLP:journals/corr/arXiv:1512.03107}. 

%
\subsection{MAIN ALGORITHM}

To solve Eq.~(\ref{eqn:obj0}) (using Epro-SGD), we introduce an augmented objective function by incorporating the constraint function as
\begin{align}\label{eqn:augment}
F(\x) = f(\x) + \lambda [c(\x)]_+.
\end{align}
It is worth noting that the augmented function in Eq.~(\ref{eqn:augment}) does not have any iteration-dependent parameter, for example the parameter $\gamma$ in Eq.~(\ref{eqn:F}). $\lambda$ is a prescribed parameter satisfying $\lambda > G_1/\rho$, as illustrated in Lemma~\ref{lem:1}.

The details of our proposed Epro-SGD algorithm is presented in Algorithm~\ref{alg:1}. Similar to Epoch-SGD~\cite{DBLP:journals/jmlr/HazanK11a}, Epro-SGD consists of a sequence of epochs, each of which has a geometrically decreasing step size and a geometrically increasing iteration number~(Line $9$ in Algorithm~\ref{alg:1}).  The updates in  every intra-epoch~(Line $5$ - $6$) are standard SGD steps applied to the augmented objective function $F(\x)$ with $\x = \x^k_t$. 
Epro-SGD is different from Epoch-SGD in that the former computes a projection only at the end of each epoch, while the latter computes a projection at each iteration. 
Consequently, when the projection step is computationally expensive~(e.g., projecting onto a positive definite constraint), Epro-SGD may require much less computation time than Epoch-SGD.
\begin{algorithm}[t]
\caption{{\small Epoch-projection SGD (Epro-SGD)}} \label{alg:1}
\begin{algorithmic}[1]
\STATE \textbf{Input}: an initial step size $\eta_1$, total number of iterations $T$, and number of iterations in the first epoch $T_1$, a Lagrangian multiplier $\lambda$~($\lambda > G_1/\rho$)
\STATE \textbf{Initialization: } $\x^1_1 \in \D$ and $k=1$
\WHILE{$\sum_{i=1}^kT_i\leq T$}
\FOR{$t = 1, \ldots, T_k$}
    \STATE Compute a stochastic gradient $\g(\x^k_t)$
    \STATE Compute $\x^k_{t+1} = \x^k_t - \eta_k(\g(\x^k_t) + \lambda \partial [c(\x^k_t)]_+)$
\ENDFOR
\STATE Compute $\widetilde\x^{k}_T=\P_{\D}[\widehat\x^k_T]$, where $\widehat\x^k_T = \sum_{t=1}^{T_k} \x^k_t/T_k$
\STATE Update $\x^{k+1}_1 = \widetilde\x^k_T$, $T_{k+1} = 2T_k, \eta_{k+1} = \eta_k/2$
\STATE Set $k=k+1$
\ENDWHILE
\end{algorithmic}
\end{algorithm}


In Lemma~\ref{lem:1}, we present an important convergence analysis for the intra-epoch steps of Algorithm~\ref{alg:1}, which are key building blocks for deriving the main results in Theorem~\ref{thm:1}.
\begin{lem}\label{lem:1}
Under Assumptions A1$\sim$A3, if we apply the update $\x_{t+1} = \x_t - \eta ( \widetilde\nabla f(\x_t;\varepsilon_t) + \lambda \nabla [c(\x_t)]_+ )$ for a number of $T$ iterations, the following equality holds
\begin{eqnarray*}
\E[f(\widetilde\x_T)]  \hskip -0.02in -  \hskip -0.02in f(\x_*)  \hskip -0.02in  \leq \hskip -0.02in  \mu \hskip -0.02in \left[\eta (G_1^2+\lambda^2G_2^2)  \hskip -0.02in + \hskip -0.02in \frac{\E[\|\x_1 - \x_*\|_2^2]}{2\eta T}\right], 
\end{eqnarray*}
where $\mu = \rho / ( \rho - G_1/\lambda)$.
\end{lem}
\begin{proof}
Let $F(\x) = f(\x) + \lambda [c(\x)]_+$ and denote by $\E_t[X]$ the expectation conditioned on the randomness until round $t-1$.  It is easy to verify that $F(\x)\geq f(\x)$, $F(\x)\geq f(\x) + \lambda c(\x)$ and $F(\x_*) = f(\x_*)$. 
For any $\x$, we have
\begin{eqnarray*}
(\x_t - \x)^{\top} \nabla F(\x_t)  &   \leq    & \frac{1}{2\eta} \left( \| \x - \x_t \|_2^2 - \|\x-\x_{t+1}\|_2^2\right) + \\
                                                &             & \frac{\eta}{2}\|\widetilde \nabla f(\x_t,\xi_t) + \lambda\nabla [c(\x_t)]_+\|_2^2 + \\
                                                &             & (\x- \x_t)^{\top}(\g(\x_t) - \nabla f(\x_t))  \\
                                                &  \leq     & \frac{1}{2\eta} \left( \|\x-\x_t\|_2^2-\|\x-\x_{t+1}\|_2^2 \right) + \\
                                                &             & \eta \left( G_1^2+ \lambda^2G_2^2 \right)  + \zeta_t(\x)\label{eqn:basic},    
\end{eqnarray*}
where $\zeta_t(\x) = (\x- \x_t)^{\top}(\g(\x_t) - \nabla f(\x_t))$. Furthermore by the convexity of $F(\x)$, we have
\begin{eqnarray*}
F(\x_t) - F(\x) & \leq & \frac{1}{2\eta}\left(\|\x-\x_t\|_2^2-\|\x-\x_{t+1}\|_2^2\right) + \\ 
                     &        &  \eta(G_1^2+ \lambda^2G_2^2)  + \zeta_t(\x).
\end{eqnarray*}
Noting that  $\E_{t}[\zeta_t(\x)] = 0$, taking expectation over randomness and summation over $t=1,\ldots, T$, we have
\begin{eqnarray*}
\frac{1}{T} \E\left[\sum_{t=1}^T(F(\x_t) - F(\x))\right]  = \E\left[F(\widehat\x_t) - F(\x)\right] \\
 \leq \frac{\E[\|\x_1 - \x\|_2^2]}{ 2\eta T} + \eta (G_1^2 + \lambda^2G_2^2). 
\end{eqnarray*}
Let $B = \E[\|\x_1 - \x_*\|_2^2] / \left(2\eta T\right) + \eta (G_1^2 + \lambda^2G_2^2)$. Since $\x_*\in\D\subseteq\B$, we have
\begin{align}\label{eqn:key}
\E\left[F(\widehat\x_t) - F(\x_*)\right] \leq B.
\end{align}
It follows that 
\begin{align}\label{eqn:B}
&\E[f(\widehat\x_T) + \lambda [c(\xh_T)]_+] \leq f(\x_*)  + B.
\end{align}
If $c(\widehat\x_T) \leq 0$, we have $\widetilde\x_T = \widehat\x_T$. Following from $F(\widetilde\x_T)\geq f(\widetilde\x_T)$ and $F(\x_*) = f(\x_*)$, we can verify that $\E [ f(\widehat\x_T) ] - f(\x_*)  \le B$ and also $\E [ f(\widehat\x_T) ] - f(\x_*)  \le \rho B / \left( \rho - G_1 / \lambda\right)$ holds. 

Next we show that $\E [ f(\widehat\x_T) ] - f(\x_*)  \le \rho B / \left( \rho - G_1 / \lambda\right)$ holds when $c(\widehat\x_T)>0$.
From Lemma~\ref{lem:0}, we have
\begin{equation} \label{eq:xx_1}
c(\widehat\x_T) \geq \rho\|\widehat\x_T - \widetilde\x_T\|_2.
\end{equation}
Moreover it follows from $\|\partial f(\x)\|_2\leq G_1$ and $f(\x_*)\leq f(\xt_T)$ that the following inequality holds
\begin{eqnarray} \label{eq:xx_2}
f(\x_*) - f(\widehat\x_T) &\leq &f(\x_*) - f(\widetilde\x_T) + f(\widetilde\x_T) - f(\widehat\x_T)  \nonumber \\
                                     & \leq & G_1\|\widehat\x_T - \widetilde\x_T\|_2.
\end{eqnarray}
Substituting Eqs.~(\ref{eq:xx_1}) and~(\ref{eq:xx_2}) into Eq.~(\ref{eqn:B}), we have
\begin{eqnarray*}
\lambda\rho \E[\|\widehat\x_T - \widetilde\x_T\|_2] & \leq & \E[f(\x_*) - f(\widehat\x_T)] + B \\ 
                                                                                & \leq & G_1\E[\|\widehat\x_T - \widetilde\x_T\|_2] + B.
\end{eqnarray*}
By some rearrangement, we have $\E[\|\widehat\x_T - \widetilde\x_T\|_2] \leq {B} / \left( \lambda\rho - G_1\right)$.
Furthermore we have 
\begin{eqnarray*}
\E[f(\widetilde\x_T)] - f(\x_*)  \leq \E[f(\widetilde\x_T) - f(\widehat\x_T)] + \E[f(\widehat\x_T)] \\ 
- f(\x_*)  
\leq \E[G_1 \|\widehat\x_T - \widetilde\x_T\|_2] + B\leq \frac{\lambda \rho}{\lambda\rho - G_1} B,
\end{eqnarray*}
where the second inequality follows from $\|\nabla f(\x)\|_2 \leq \E\|\nabla f(\x;\varepsilon)\| \leq G_1$, and $|f(\x) - f(\y)|\leq G_1\|\x-\y\|_2$ for any $\x,\y$. This completes the proof of the lemma. 
\end{proof}
We present a main convergence result of the Epro-SGD algorithm in the following theorem.
\begin{thm}\label{thm:1}
Under Assumptions A1$\sim$A3 and given that $f(\x)$ is $\beta$-strongly convex, if we let $\mu = \rho/(\rho- G_1/\lambda)$, $G^2 = G_1^2 + \lambda^2 G_2^2$,  and set $T_1=8, \eta_1=\mu/(2\beta)$, the total number of epochs $k^\dagger$ in Algorithm~\ref{alg:1} is given by
\begin{equation} \label{thm1:eq1}
k^\dagger = \left\lceil \log_2\left(\frac{T}{8} + 1\right) \right\rceil\leq \log_2 \left(\frac{T}{4} \right), 
\end{equation}
the solution $\x^{k^\dagger+1}_1$ enjoys a convergence rate of
\begin{equation}  \label{thm1:eq2}
\E[f(\x^{k^\dagger+1}_1)] - f(\x_*) \leq \frac{32\mu^2 G^2}{\beta (T+8)},
\end{equation}
and $c(\x_1^{k^\dagger+1})\leq 0$. 
\end{thm}
\begin{proof}
From the updating rule $T_{k+1} = 2  T_k$, we can easily verify Eq.~(\ref{thm1:eq1}). Since $\x_1^{k^\dagger + 1} = \widetilde \x_T^{k^\dagger} \in \D$, the inequality $c(\x_1^{k^\dagger+1})\leq 0$ trivially holds. 

Let $V_k = \mu^2 G^2 / \left( 2^{k-2}\beta \right)$. It follows that $T_k  = 2^{k+2} = 16\mu^2G^2 / \left( V_k \beta \right)$ and $\eta_k = \mu / \left( 2^k \beta \right) =  V_k / \left( 4\mu G^2 \right)$. 
Next we show the inequality
\begin{equation} \label{thm1:eq3}
\E[f(\x^k)] - f(\x_*) \leq V_k
\end{equation}
holds by induction. Note that Eq.~(\ref{thm1:eq3}) implies $\E[f(\x^{k+1}_1)] - f(\x_*) \leq 32 \mu^2 G^2 / \left(\beta (T+8)\right)$, due to $V_k < 32 \mu^2 G^2 / \left(\beta (T+8)\right)$. Let $\Delta_k = f(\x^k_1) - f(\x_*)$. It follows from Lemma~\ref{lem:3}, $\mu>1$, and $G^2>G_1^2$, the inequality in Eq.~(\ref{thm1:eq3}) holds when $k=1$. Assuming that Eq.~(\ref{thm1:eq3}) holds for $k = k^\dagger$, we show that Eq.~(\ref{thm1:eq3}) holds for $k = {k^\dagger}_1 + 1$. 

For a random variable $X$ measurable with respect to the randomness up to epoch $k^\dagger+1$. Let $\E_{k^\dagger}[X]$ denote the expectation conditioned on all the randomness up to epoch $k^\dagger$. Following Lemma~\ref{lem:1}, we have
\[
\E_{k^\dagger}[\Delta_{{k^\dagger}+1}] \leq \mu \left[\eta_{k^\dagger} G^2+ \frac{\E[\|\x^{k^\dagger}_1 - \x_*\|_2^2]}{2\eta_{k^\dagger} T_{k^\dagger}}\right].
\]
Since $\Delta_{k^\dagger} = f(\x^{k^\dagger}_1) - f(\x_*)\geq \beta\|\x_1^{k^\dagger}-\x_*\|_2^2/2$ by the strong convexity in $f(\cdot)$, we have
\begin{eqnarray*}
\E[\Delta_{{k^\dagger}+1}]  \hskip -0.08in &   \leq  & \hskip -0.08in   \mu \left[\eta_{k^\dagger} G^2+ \frac{\E[\Delta_{k^\dagger}]}{\eta_{k^\dagger} T_{k^\dagger} \beta}\right] \\
                                            \hskip -0.08in & =  & \hskip -0.08in \mu \eta_{k^\dagger} G^2 + \frac{V_{k^\dagger} \mu}{\eta_{k^\dagger} T_{k^\dagger} \beta} = \frac{V_{k^\dagger}}{4} + \frac{V_{k^\dagger}}{4} = V_{k^\dagger + 1},
\end{eqnarray*}
which completes the proof of this theorem. 
\end{proof}
\textbf{Remark}~We compare the obtained main results in Theorem~\ref{thm:1} with several existing works. Firstly Eq.~(\ref{thm1:eq2}) implies that Epro-SGD achieves an optimal bound $O(1/T)$, matching the lower bound for a strongly convex problem~\cite{DBLP:journals/jmlr/HazanK11a}. Secondly in contrast to the OneProj method~\cite{DBLP:conf/nips/MahdaviYJZY12} with a convergence rate $O(\log T/T)$, Epro-SGD uses no more than $\log_2(T/4)$ projections to obtain an $O(1/T)$ convergence rate. Epro-SGD thus has better control over the solution for not deviating (too much) from the feasibility domain in the intermediate iterations. 
Thirdly compared to Epoch-SGD with its convergence rate bounded by $O\left( 8G_1^2 / \left(\beta T\right)\right)$, the convergence rate bound of Epro-SGD is only worse by a factor of constant $4\mu^2G^2/G_1^2$.  Particularly consider a positive definite  constraint with $\rho=1$, $\mu = 2$, and $\lambda = 2G_1/\rho$, we have $G^2 = 5G_1^2$ and the bound of Epro-SGD is only worse by a factor of $80$ than Epoch-SGD. Finally compared to the logT-SGD algorithm~\cite{logTSGD} which requires $O(\kappa \log_2 T)$ projections~($\kappa$ is the conditional number),  the number of projections in Epro-SGD is independent of  the conditional number. 
The main results in Lemma~\ref{lem:1} and Theorem~\ref{thm:1} are expected convergence bounds. In Theorem~\ref{cor:1}~(proof provided in Appendix) we show that Epro-SGD also enjoys a high probability bound under a boundedness assumption, i.e., $\|\x_* -\x_t\|_2\leq D$ for all $t$. Note that the existing Epoch-SGD method~\cite{DBLP:journals/jmlr/HazanK11a} uses two different methods to derive its high probability bounds. Specifically the first method relies on an efficient function evaluator to select the best solutions among multiple trials of run; while the second one modifies the updating rule by projecting the solution onto the intersection of the domain and a center-shifted bounded ball with decaying radius. These two methods however may lead to additional computation steps, if being adopted for deriving high probability bounds for Epro-SGD. 
\begin{thm}\label{cor:1}
Under Assumptions A1$\sim$A3 and given $\|\x_t-\x_*\|_2\leq D$ for all $t$. If we let $\mu = \rho/(\rho- G_1/\lambda)$, $G^2 = G_1^2 + \lambda^2 G_2^2, C =\left( 8G_1^2 / \beta + 2G_1D\right) \ln (m/\epsilon)  + 2G_1D$, and set   $T_1\geq \max\left( 3C\beta / \left(\mu G^2 \right), 9\right)$, $\eta_1=\mu/(3\beta)$, the total number of epochs $k^\dagger$ in Algorithm~\ref{alg:1} is given by 
\[
k^\dagger=\left \lfloor\log_2\left(\frac{T}{T_1}+1\right)\right\rfloor\leq \log_2(T/4),
\]
and the final solution $\x^{k^\dagger+1}_1$ enjoys a convergence rate of
\[
f(\x^{k^\dagger+1}_1) - f(\x_*) \leq \frac{4T_1\mu^2 G^2}{\beta (T+T_1)}
\]
with a probability at least $1-\delta$, where $m=\lceil 2\log_2 T\rceil$.
\end{thm}
\textbf{Remark}~The assumption $\|\x_* -\x_t\|_2\leq D$ can be satisfied practically, if we estimate the value of $D$ such that $\|\x_*\|_2\leq D/2$, and then project the intermediate solutions onto $\|\x\|_2\leq D/2$ at every iteration. Note that Epoch-SGD~\cite{DBLP:journals/jmlr/HazanK11a} requires a total number of $T$ projections, and its high probability bound of Epoch-SGD is denoted by $\displaystyle f(\x^{k^\dagger+1}_1) - f(\x_*)\leq 1200G_1^2\log (1/\widetilde \delta) / \left( \beta T \right)$ with a probability at lest $1-\delta$, where $\widetilde\delta = \delta / \left( \left\lfloor \log_2(T/300 + 1)\right\rfloor \right)$.

\subsection{A PROXIMAL VARIANT}

We propose a proximal extension of Epro-SGD, by exploiting the structure of the objective function. Let the objective function in Eq.~(\ref{eqn:obj0}) be a sum of two components
\[
\widehat f(\x) = f(\x) + g(\x),
\] where $g(\x)$ is a relatively simple function, for example a squared $\ell_2$-norm or $\ell_1$-norm, such that the involved proximal mapping
\[
\min_{\x\in\R^d}\quad g(\x) + \frac{1}{2}\|\x - \xh\|_2^2
\]
is easy to compute. The optimization problem in Eq.~(\ref{eqn:obj0}) can be rewritten as 
\begin{equation}\label{eqn:prox}
\begin{aligned}
&\min_{\x\in\R^d}\quad f(\x) + g(\x)\\
& s.t.\quad c(\x)\leq 0.
\end{aligned}
\end{equation}
Denote by $\x_*$ the optimal solution to Eq.~(\ref{eqn:prox}). We similarly introduce an augmented objective function as
\begin{align}\label{eqn:new}
F(\x) = f(\x) + \lambda [c(\x)]_+ + g(\x).
\end{align}
The update of the proximal SGD method for solving~(\ref{eqn:prox})~\cite{DBLP:conf/colt/DuchiSST10,DBLP:conf/nips/DuchiS09,RePEc:cor:louvco:2007076}  is given by 
\begin{align}\label{eqn:prix}
\x_{t+1} =\arg\min_{\x\in\D} \frac{1}{2}\|\x - (\x_t - \eta\g(\x_t) )\|_2^2 + \eta g(\x).
\end{align}
If $g(\x)$ is a sparse regularizer, the proximal SGD can guarantee the sparsity in the intermediate solutions and usually yields better convergence than the standard SGD. However, given a complex constraint, solving the proximal mapping may be computational expensive. 
Therefore, we consider a proximal variant of Epro-SGD which involves only the proximal mapping of $g(\x)$ without the constraint $\x\in\D$. An instinctive solution is to use the following update in place of step 6 in Algorithm~\ref{alg:1}:
\begin{eqnarray}\label{eqn:cb}
\x^k_{t+1} & \hskip -0.12in = \hskip -0.12in &  \arg\min_{\x\in\R^d} \frac{1}{2} \| \x - \nonumber \\
                 & \hskip -0.25in                        &  \left[ \x^k_t \hskip -0.03in - \hskip -0.03in \eta_k(\g(\x^k_t) \hskip -0.03in + \hskip -0.03in \lambda \partial [c(\x^k_t)]_+) \right] \hskip -0.03in \|_2^2  \hskip -0.03in +  \hskip -0.03in \eta_k g(\x).
\end{eqnarray}
Based on this update and using techniques in Lemma~\ref{lem:1}, we obtain a similar convergence result~(proof provided in Appendix), as presented in the following lemma~\cite{duchi-2009-efficient}. 
\begin{lem}\label{lem:prix}
Under Assumptions A1$\sim$A3 and setting $\mu = \rho / \left( \rho - G_1/\lambda\right)$, by applying the update in Eq.~(\ref{eqn:cb}) a number of $T$ iterations, we have
\begin{eqnarray} \label{lemma3:eq0}
\E[\widehat f(\widetilde\x^k_T)] - \widehat f(\x_*)  & \leq & \mu \E\left[\eta G^2+ \frac{\|\x^k_1 - \x_*\|_2^2}{2\eta T} \right . \nonumber \\
                                                              &         & \left.+ \frac{g(\x^k_1)-g(\x^k_{T+1})}{T}\right],
\end{eqnarray}
where $G^2=(G_1^2+\lambda^2G_2^2)$, and $\widetilde\x^k_T$ denotes the projected solution of the averaged solution $\widehat\x^k_T = \sum_{t=1}^T\x^k_t / T$. 
\end{lem}
Different from the main result in Lemma~\ref{lem:1}, Eq.~(\ref{lemma3:eq0}) has an additional term $(g(\x^k_1) - g(\x^k_{T+1}))/T_k$; it makes the convergence analysis in Epro-SGD difficult. To overcome this difficulty, we adopt the optimal regularized dual averaging (ORDA) algorithm~\cite{NIPS2012_4543} for solving Eq.~(\ref{eqn:new}). The details of ORDA are presented in Algorithm~\ref{alg:ORDA}. The main convergence results of ORDA are summarized in the following lemma~(proof provided in Appendix). 
\begin{algorithm}[t]
\caption{{\small Optimal Regularized Dual Averaging~(ORDA)}} \label{alg:ORDA}
\begin{algorithmic}[1]
\STATE \textbf{Input}: a step size $\eta$,  the  number iterations $T$, and the initial solution $\x_1$,
\STATE Set $\theta_t = \frac{2}{t+1}$, $\nu_t = \frac{2}{t}$, $\gamma_t =\frac{ t^{3/2}}{\eta}$ and $\z_1 = \x_1$
\FOR{$t=1,\ldots, T+1$}
    \STATE compute $\u_t = (1-\theta_t)\x_t + \theta_t\z_t$
    \STATE compute a stochastic subgradient   $\g(\x_t)$ of $f(\x)$ at $\x_t$ and a subgradient of $[c(\x_t)]_+$
    \STATE let $\bar\g_t= \theta_t\nu_t\left(\sum_{\tau=1}^t\frac{\g(\x_\tau) + \lambda\partial [c(\x_\tau)]_+}{\nu_\tau}\right)$
    \STATE compute 
$\z_{t+1}= \arg\min_{\x} \bar\g_t^{\top}\x +\frac{ \theta_t\nu_t\gamma_{t+1}}{2}\|\x - \x_1\|_2^2 + g(\x) $
    \STATE compute 
$\x_{t+1}= \arg\min_{\x}\x^{\top}(\g(\x_t) + \lambda\partial [c(\x_t)]_+)   +\frac{\gamma_{t}}{2}\|\x - \u_t\|_2^2+ g(\x)$
   \ENDFOR
\STATE \textbf{Output}:  $\xh_T = \x_{T+2}$
\end{algorithmic}
\end{algorithm}
\begin{lem}\label{lem:ORDA}
Under Assumptions A1$\sim$A3 and setting $\mu = \rho / \left( \rho - G_1/\lambda\right)$, by running ORDA a number of $T$ iterations for solving the augmented objective~(\ref{eqn:new}), we have
\begin{align*}
\E[F(\xh_T) -  F(\x_*)]\leq \frac{4 \|\x_1 - \x_*\|_2^2}{\eta\sqrt{T}} + \frac{2\eta (3G_1 + 2\lambda G_2)^2}{ \sqrt{T}},
\end{align*}
and 
\begin{eqnarray*}
\E[\widehat f(\widetilde\x_T)] - \widehat f(\x_*) & \leq  & 
\mu \E\left[\frac{4 \|\x_1 - \x_*\|_2^2}{\eta\sqrt{T}}  + \right. \\ 
                                                          &         & \left .  \frac{2\eta (3G_1 + 2\lambda G_2)^2}{ \sqrt{T}} \right],
\end{eqnarray*}
where $\widetilde\x_T$ denotes the projected solution of the final  solution $\widehat\x_T$.
\end{lem}
\begin{algorithm}[t]
\caption{{\small Epoch-projection ORDA (Epro-ORDA)}}  \label{alg:Epro-ORDA}
\begin{algorithmic}[1]
\STATE \textbf{Input}: an initial step size $\eta_1$, total number of iterations $T$, and number of iterations in the first epoch $T_1$, a Lagrangian multiplier $\lambda > G_1/\rho$
\STATE \textbf{Initialization: } $\x^1_1 \in \D$ and $k=1$.
\WHILE{$\sum_{i=1}^kT_i\leq T$}
\STATE Run ORDA to obtain $\xh_T^k = \text{ORDA}(\x_1^k, \eta_k, T_k)$
\STATE Compute $\widetilde\x^{k}_T=\P_{\D}[\widehat\x^k_T]$
\STATE Update $\x^{k+1}_1 = \widetilde\x^k_T$, $T_{k+1} = 2T_k, \eta_{k+1} = \eta_k/\sqrt{2}$
\STATE Set $k=k+1$
\ENDWHILE
\end{algorithmic}
\end{algorithm}
We present a proximal variant of Epro-SGD, namely Epro-ORDA, in Algorithm~\ref{alg:Epro-ORDA}, and summarize its convergence results in Theorem~\ref{thm:2}. Note that Algorithm~\ref{alg:ORDA} and the convergence analysis in Lemma~\ref{lem:ORDA} are independent of the strong convexity in $\widehat f(\x)$; the strong convexity is however used for analyzing the convergence of Epro-ORDA in Theorem~\ref{thm:2}~(proof provided in Appendix).
\begin{thm}\label{thm:2}
Under Assumptions A1$\sim$A3 and given that $\widehat f(\x)$ is $\beta$-strongly convex, if we let $\mu = \rho/(\rho- G_1/\lambda)$ and $G = 3G_1 + 2\lambda G_2$,  and set $T_1=16, \eta_1=\mu/\beta$, then the total number of epochs $k^\dagger$ in Algorithm~\ref{alg:Epro-ORDA} is given by
\[
k^\dagger = \left\lfloor \log_2\left(\frac{T}{17} + 1\right) \right\rfloor\leq \log_2(T/8),
\]
and the final solution $\x^{k^\dagger+1}_1$ enjoys a convergence rate of
\[
\E[\widehat f(\x^{k^\dagger+1}_1)] - \widehat f(\x_*) \leq \frac{68\mu^2 G^2}{\beta (T+17)},
\]
and $c(\x_1^{k^\dagger+1})\leq 0$.
\end{thm}

\section{AN EXAMPLE OF SOLVING LMNN VIA EPRO-SGD}\label{sec:lmnn}

In this section, we discuss an application of applying the proposed Epro-SGD to solve a high dimensional distance metric learning (DML) with a  large margin formulation, i.e., the large margin nearest neighbor (LMNN) classification method~\cite{weinberger2009}. LMNN classification is one of the state-of-the-art methods for k-nearest neighbor classification. It learns a positive semi-definite distance metric, based on which the examples from the k-nearest neighbors always belong to the same class, while the examples from different classes are separated by a large margin. 

To describe the LMNN method, we first present some notations. Let $(x_i, y_i), i=1, 2, \cdots, \widehat N$, be a set of data points, where $x_i\in\R^d$ and $y\in\mathcal Y$ denote the feature representation and the class label, respectively. Let $A$ be a positive definite matrix that defines a distance metric as $\mbox{dist}(x_1, x_2)=\|x_1-x_2\|_A^2=(x_1-x_2)^{\top}A(x_1-x_2)$. To learn a distance metric that separates the examples from different classes by a large margin, one needs to extract a set of similar examples (from the same class) and dissimilar examples~(from a different class), denoted by $(x^j_1,x^j_2,x^j_3), j=1,\ldots N$, where $x^j_1$ shares the same class label to $x^j_2$ and a different class from $x^j_3$.  To this end, for each example $x^j_1=x_i$ one can form $x^j_2$ by extracting the k nearest neighbors (defined by an Euclidean distance metric) that share the same class label to $x_i$, and form $x^j_3$ by extracting a set of examples that have a different class label. Then an appropriate distance metric could be obtained from the following constrained optimization problem  
\begin{eqnarray} \label{eq:lmnn}
\min_{A}        &&   \hskip -0.25in \frac{c}{N}\sum_{j=1}^N\ell\left(A, x^j_1,x^j_2,x^j_3\right) \hskip -0.04in + \hskip -0.04in (1-c)tr(AL) \hskip -0.04in + \hskip -0.04in \frac{\mu_1}{2}\|A\|_F^2 \nonumber \\
s.t.                &&   \hskip -0.25in A\succeq \epsilon I,
\end{eqnarray}
where $\ell (A, x^j_1,x^j_2,x^j_3) =\max(0, \|x^j_1-x^j_2\|_A^2 -\|x^j_1-x^j_3\|_A^2+1)$ is a hinge loss and $c\in(0, 1)$ is a trade-off parameter. In Eq.~(\ref{eq:lmnn}), $A\succeq \epsilon I$ is used as the constraint to ensure that Assumption A3 holds. Minimizing the first term is equivalent to maximizing the margin between $\|x^j_1-x^j_3\|_A^2$ and $\|x^j_1-x^j_2\|_A^2$. The matrix $L$ encodes certain prior knowledge about the distance metric; for example, the original LMNN work~\cite{weinberger2009} defines $L$ as $L=\sum_{l=1}^m\|x^l_1-x^l_2\|_A^2/m$, where $(x^l_1,x^l_2)$ are all k-nearest neighbor pairs from the same class. Other works~\cite{conf/aaai/LiuTTL10} have used a weighted summation of distances between all data pairs $L=\sum_{i\neq j}^nw_{ij}\|x_i-x_j\|_A^2/n(n-1)$ or intra-class covariance matrix~\cite{Qi:2009:ESM:1553374.1553482}. The last term $\|A\|_F^2/2$ is used as a regularization term and also makes the objective function strongly convex.

For data sets of very high dimensionality, i.e., $d \gg n$, LMNN in Eq.~(\ref{eq:lmnn}) usually produces a sub-optimal solution~\cite{Qi:2009:ESM:1553374.1553482}, as this formulation does not capture the sparsity structure of the features. Therefore we add a sparse regularizer and express the formulation below
\begin{eqnarray}\label{eq:lmnn-sparse}
\min_{A} &&\frac{c}{N}\sum_{j=1}^N\ell\left(A, x^j_1,x^j_2,x^j_3\right) + (1-c)tr(AL)      \nonumber \\ 
                                          &&    +  \frac{\mu_1}{2}\|A\|_F^2 + \mu_2 \|A\|^{\text{off}}_1 \nonumber \\
s.t.                                     &&    A\succeq \epsilon I,
\end{eqnarray}
where $\|A\|^{\text{off}}_1=\sum_{i\neq j}|A_{ij}|$ is an elmenent-wise $\ell_1$-norm excluding the diagonal entries. Note that this sparse regularizer $\|A\|^{\text{off}}_1$ have been previously used in~\cite{Qi:2009:ESM:1553374.1553482} for a different purpose. 

Many standard optimization solvers or algorithms may not be efficient for solving Eq.~(\ref{eq:lmnn-sparse}). Firstly, the optimization problem in Eq.~(\ref{eq:lmnn-sparse}) can be formulated as a semi-definite program~(SDP); however, general SDP solvers usually scale poorly with the number of triplets and is not suitable for large scale data analysis. Secondly, the gradient decent method presented in~\cite{weinberger2009} requires to project intermediate solutions onto a positive definite cone; this operation invokes expensive singular value decomposition~(SVD) for a large matrix and this limitation restricts the real-world applications of the gradient descent method. 
Thirdly,~\cite{Qi:2009:ESM:1553374.1553482} employs a block coordinate descent~(BCD) method to solve an L$1$-penalized log-det optimization problem; the BCD method is not suitable for solving Eq.~(\ref{eq:lmnn-sparse}), as the loss function is not linear in the variable $A$.

We employ the proposed Epro-SGD algorithm to solve the LMNN formulation in Eq.~(\ref{eq:lmnn-sparse}). Let $f(A) = \frac{c}{N}\sum_{j=1}^N\ell(A, x^j_1,x^j_2,x^j_3) + (1-c)tr(AL)$ and $g(A) = \frac{\mu_1}{2}\|A\|_F^2 + \mu_2 \|A\|^{\text{off}}_1$. The positive definite constraint can be rewritten into an inequality constraint as $c(A) = \epsilon -\lambda_{\min}(A)\leq 0$, where $\lambda_{\min}(\cdot)$ denotes the minimum eigen-value of the matrix $A$. We also make the correspondences $\R^d\rightarrow \R^{d\times d}$, $\x\rightarrow A$, $\|\x\|_2\rightarrow \|A\|_F$, and provide necessary details below in a question-answer form
\begin{itemize}
\item How to compute the stochastic gradient of $f(A)$? First sample one triplet $(x^j_1,x^j_2,x^j_3)$ (or a small number of triplets) and then compute $\widetilde\nabla f(A;\varepsilon) = c[(x^j_1-x^j_2)(x^j_1 -x^j_2)^{\top}- (x^j_1-x^j_3)(x^j_1 -x^j_3)^{\top}]  + (1-c)L$ if $\ell(\|x^j_1-x^j_2\|_A^2 -\|x^j_1-x^j_3\|_A^2+1)>0$, $\widetilde\nabla f(A;\varepsilon) = (1-c)L$ otherwise.
\item How to compute the gradient of $[c(A)]_+=[\epsilon-\lambda_{\min}(A)]_+$? By the theory of matrix analysis, the subgradient of $[c(A)]_+$ can be computed  by $\partial c(A)= -\u\u^{\top}$ if $c(A)>0$, and zero otherwise,  where $\u$ denotes the eigevector of $A$ associated with its minimum eigenvalue.
\item What is the solution to the following proximal gradient step?
\begin{equation*}
\min_{A} \frac{1}{2}\|A - \bar A_{t+1}\|_F^2+ \eta \left(\frac{\mu_1}{2}\|A\|_F^2  
+ \mu_2 \|A\|^{\text{off}}_1\right).
\end{equation*}
The solution can be obtained via a soft-thresholding algorithm~\cite{Beck:2009:FIS:1658360.1658364}.
\item What are the appropriate values for $\beta, \rho, r, \lambda$, that are necessary for running the algorithm? The value of $\beta=\mu_1$. The value of $\rho$ is  $ \min_{c(A)=0}\|\nabla c(A)\|_F = 1$.  The value of $r$ can be set to $\sqrt{2c/\mu_1}$. The value of $G_2=1$.  The value of $G_1$ can be estimated as $8cR^2 + (1-c)\|L\|_F+\mu_1r + \mu_2d$ if we assume $\|x_i\|_2\leq R, i=1,\ldots, n$. The value of $\lambda> G_1$ is usually tuned among a set of prespecified values.
\end{itemize}
Finally, we discuss the impact of employing Epro-SGD and Epro-ORDA method on accelerating the computation for solving LMNN. Note that at each iteration to compute the gradient of $c(A)$, we need to compute the minimum eigen-value and its eigen-vector. For a dense matrix, it usually involves a time complexity of $O(d^2)$. However, by employing a proximal projection, we can guarantee that the intermediate solution $A_t$ is a element-wise sparse solution, for which the computation of the last eigen-pair can be substantially  reduced to be  linear to the  number of non-zeros elements in $A_t$.

To analyze the running time compared to the Epoch-SGD method, let us assume we are interested in an $\epsilon$-accurate solution. In the following discussion, we take a particular choice of $\lambda = 2G_1$ and  suppress the dependence on constants and only consider dependence on $T$, $G_1$ and $d$. 
The number of iterations required by Epro-SGD or Epro-ORDA is $\Omega\left( G_1^2 / \epsilon\mu_1 \right)$. Taking into account the running time per iteration, the total running time of Epoch-SGD is $\Omega \left( G^2_1d^3 / \left(\epsilon \mu_1 \right) \right)$ and that of Epro-SGD/Epro-ORDA  is $\Omega \left( G^2_1d^2 / \left(\epsilon \mu_1 \right) \right)$. When $d$ is very large, the speed-up can be orders of magnitude.


\section{EXPERIMENTS}

In this section, we empirically demonstrate the efficiency and effectiveness of the proposed Epro-SGD algorithm. We compare the following four algorithms:
\begin{itemize}
\item Stochastic sub-Gradient Descent method (SGD)~\cite{Shalev-Shwartz:2007:PPE:1273496.1273598}: we set the step size $\eta_t = 1 / (\lambda t)$ and SGD achieves a rate of convergence $O(logT/T)$, requiring $O(T)$ projections for a constrained convex optimization problem.
\vskip -0.1in
\item One-Projection SGD method (OneProj)~\cite{DBLP:conf/nips/MahdaviYJZY12}: we set the step size $\eta_t = 1 / (\lambda t)$ and OneProj achieves a rate of convergence $O(logT/T)$, requiring only one projection for a constrained strongly convex optimization problem. 
\item $O(logT)$-projections SGD method (logT)~\cite{logTSGD}: we set the step size $\eta_t = 1 / (\sqrt{6} L)$ and logT achieves a rate of convergence $O(1/T)$, requiring $O(\log T)$ projections steps for a constrained strongly convex optimization problem. 
\item the proposed Epro-SGD with $O(logT)$ number of projections (Epro): we set the step size $\eta_t = 1 / (\lambda t)$ and Epro archives a rate of convergence $O(1/T)$ with $O(\log T)$ projections steps for a constrained strongly convex optimization problem.
\end{itemize}
For illustration, we apply the competing algorithms for solving the constrained Lasso problem and the Large Margin Nearest Neighbor Classification (LMNN) in Eq.~(\ref{eq:lmnn-sparse}) respectively. We implement all algorithms using Matlab R$2015$a and conduct all simulations on an Intel(R) Xeon(R) CPU E5-2430~(15M Cache, 2.20 GHz). 


\subsection{EXPERIMENTS ON THE CONSTRAINED LASSO FORMULATION} 

\begin{figure}
\includegraphics[width=0.24\textwidth,height=3cm]{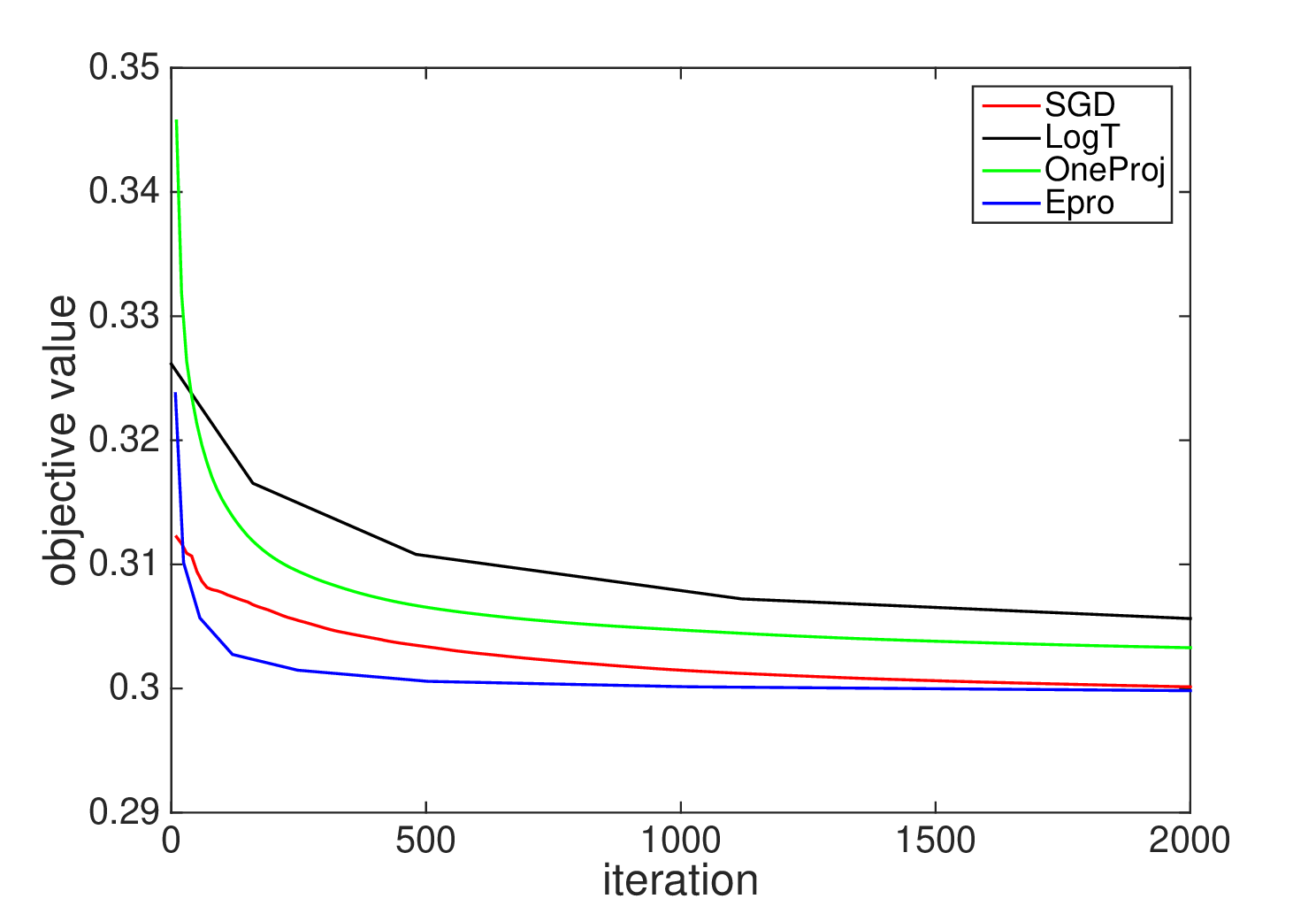}  \hskip -0.1in
\includegraphics[width=0.24\textwidth,height=3cm]{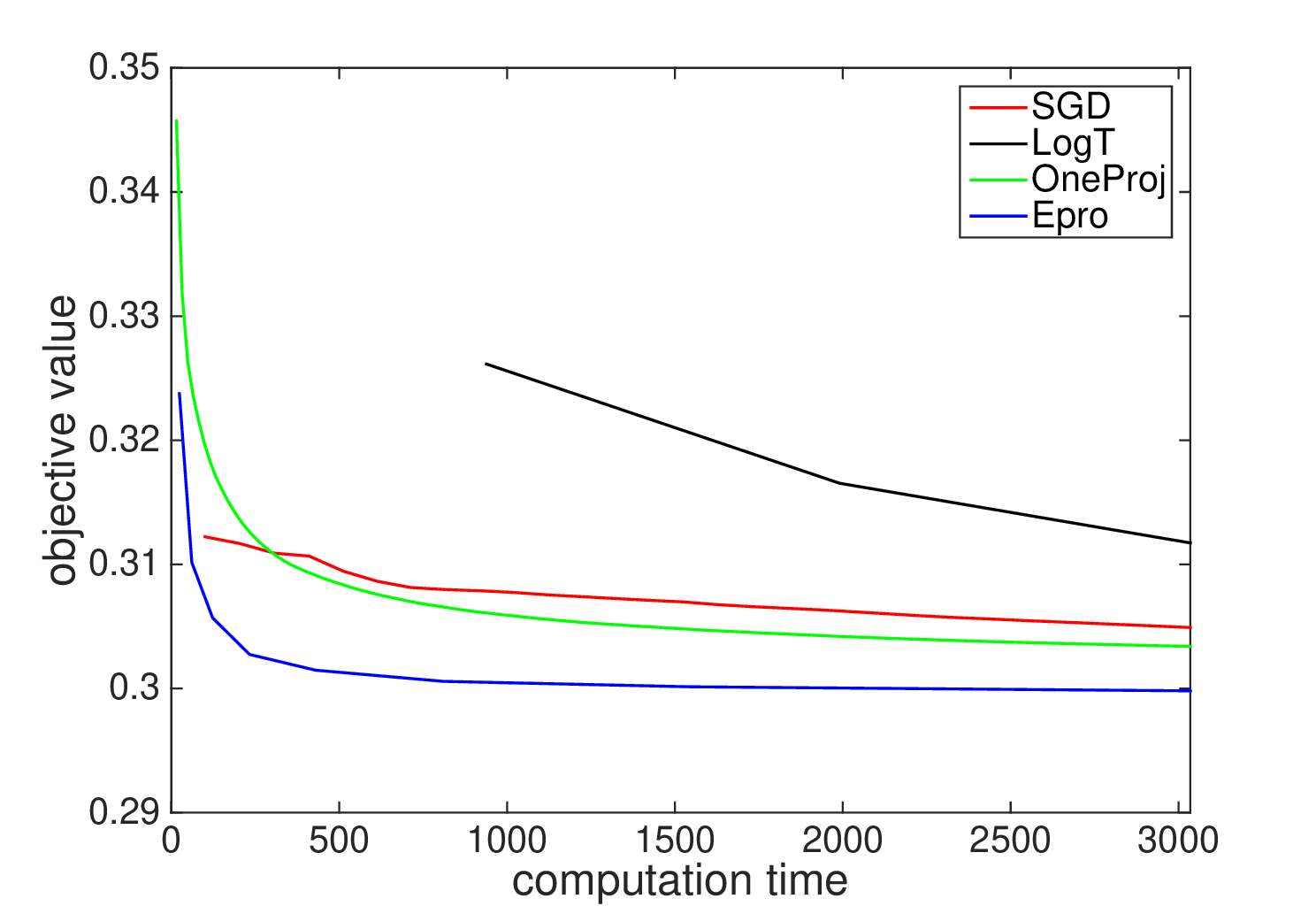}
\caption{Empirical comparison of the four competing methods for solving Eq.~(\ref{eq:lasso}). (1) Left plot: the change of the objective values with respect to the iteration number. (2) Right plot: the change of the objective values with respect to the computation time (in seconds).}
\label{fig:lasso}       
\end{figure}
We apply the proposed Epro-SGD algorithm and the other three competing algorithms to solve the L$1$-norm constrained least squares optimization problem
\begin{eqnarray} \label{eq:lasso}
\min_w        &&  \frac{1}{2 N} \sum_{i=1}^N \left( x_i^T w - y_i \right)^2 + \alpha \|w\|^2 \nonumber \\
s.t.              &&  \| w \|_1 \le \beta. 
\end{eqnarray}
Eq.~(\ref{eq:lasso}) is an equivalent constrained counterpart of the well studied Lasso formation~\cite{lasso}. They aim at achieving entry-wise sparsity in the weight vector $w$ while computing a linear predictor for regression. 

We use the algebra data, a benchmark data from KDD Cup 2010~\cite{Algebra-Data}, for the following experiments. Specifically we use a preprocessed version of the algebra data\footnote{https://www.csie.ntu.edu.tw/~cjlin/libsvmtools/datasets/} for our simulations. This preprocessed data set consists of $8,407,752$ samples from two classes, and each of the samples is represented as a feature feature of dimensionality $20,216,830$. In our experiments, we set $\alpha = 1$ and $\beta = 0.5$ for Eq.~(\ref{eq:lasso}). We tune the initial step size respectively for each of the competing algorithms to get the (nearly) best performance; specifically in this experiments, we set $\eta_0 = 0.5$ for SGD, $\eta_0 = 0.3$ and $\lambda = 0.03$ for Epro, $\eta_0 = 0.1$ for OneProj, and $\eta_0 = 0.1$ for LogT.

In the experiments, we respectively run all competing algorithms for $2000$ iterations; we then record the obtained objective values and the corresponding computation time. The experimental results are presented in Figure~\ref{fig:lasso}. The left plot shows how the objective value is changed with respect to the algorithm iteration. Note that for Eq.~(\ref{eq:lasso}), the number of algorithm iterations is equal to the number of stochastic gradient computation (the access to the subgradient of the objective function).  From this plot, we can observe that after running $2000$ iterations, Epro and SGD attain smaller objective values, compared to OneProj and LogT; we can also observe that OneProj empirically converges slightly faster than logT. The right plot shows how the objective value is changed with respect to the computation time. For this experiment, we set the maximum computation time to 3035 seconds, which is the computation time required by running Epro for $2000$ iterations. We can observe that Epro attain a smaller objective value, compared to the other three competing method; meanwhile, the standard SGD and OneProj attain similar objective values, given a fixed amount of computation time.

\subsection{EXPERIMENTS ON THE LMNN FORMULATION}

We apply the four competing algorithms to solve the LMNN formulation in Eq.~(\ref{eq:lmnn-sparse}). We use the Cora data~\cite{CoraData} for the following experiments. Cora consists of $2708$ scientific publications exclusively from $7$ different categories. Each publication is represented by a normalized vector of length $1$ and dimensionality $1433$. From this data, we construct $5416$ neighbor pairs (NP) by randomly selecting $2$ publications of the same label; we then construct $16248$ non-neighbor (NNT) by randomly selecting 3 non-neighbor publications (of a different label) for each of the NPs. Therefore, in each iteration of the SGD-type methods, we can use a NP and a NNT to construct a stochastic gradient for the optimization formulation. We set $c = 0.5$, $\mu_1 = 10^{-4}$, and $\mu_2 = 10^{-3}$ in Eq.~(\ref{eq:lmnn-sparse}). We terminate the algorithms when the iteration number is larger than $4,000$ or the relative change of the objective values in two iterations is smaller than $10^{-8}$; we also record the obtained objective values, the required iteration number, and the computation time. Similarly we tune the initial step size respectively for the competing algorithms; specifically, we set $\eta_0 = 4 \times 10^{-8}$ for SGD, $\eta_0 = 10^{-5}$ and $\lambda = 0.1$ for Epro, $\eta_0 = 5 \times 10^{-7}$ for OneProj, and $\eta_0 = 10^{-6}$ for LogT.
\begin{figure}
\includegraphics[width=0.24\textwidth,height=3cm]{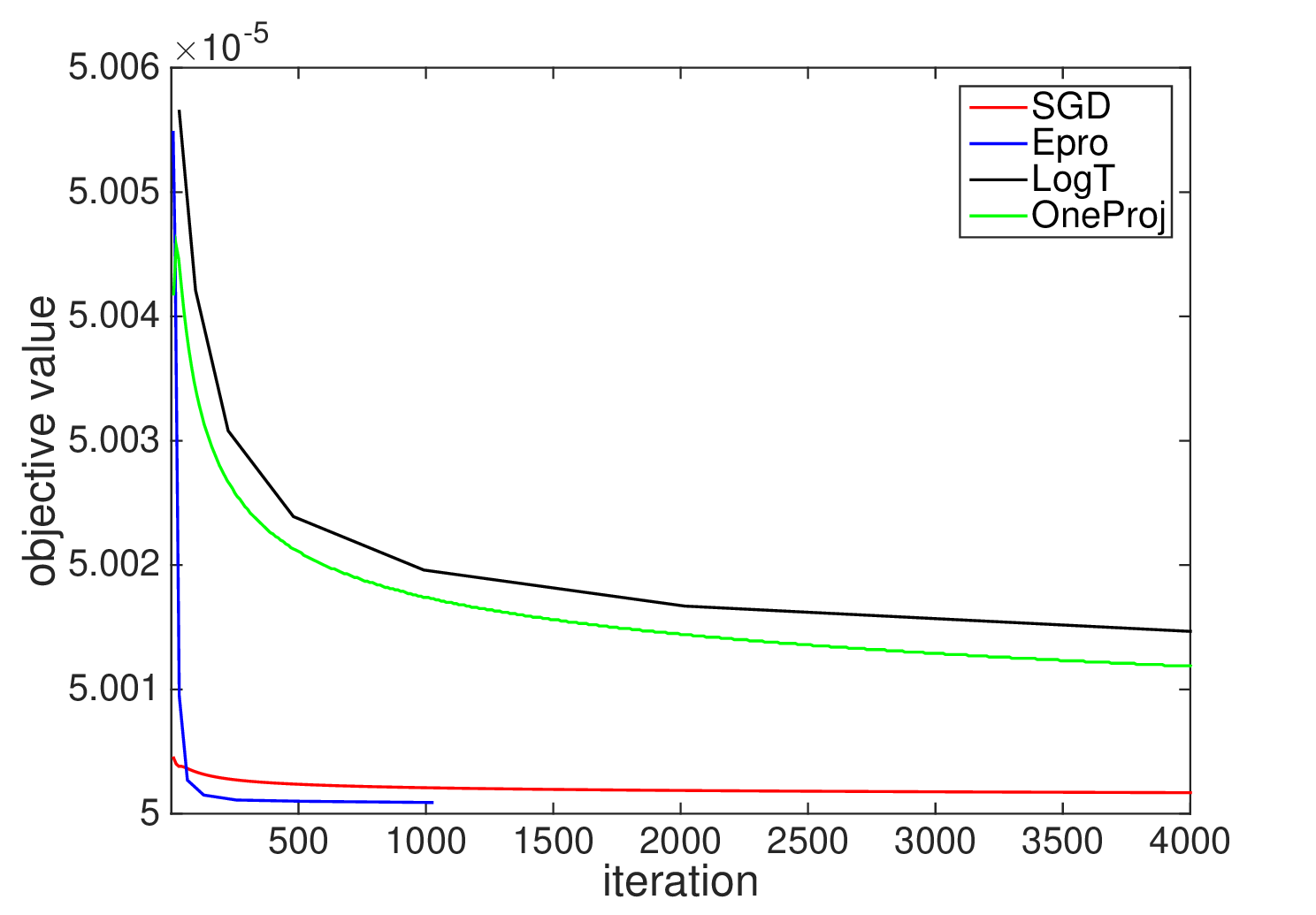} \hskip -0.1in
\includegraphics[width=0.24\textwidth,height=3cm]{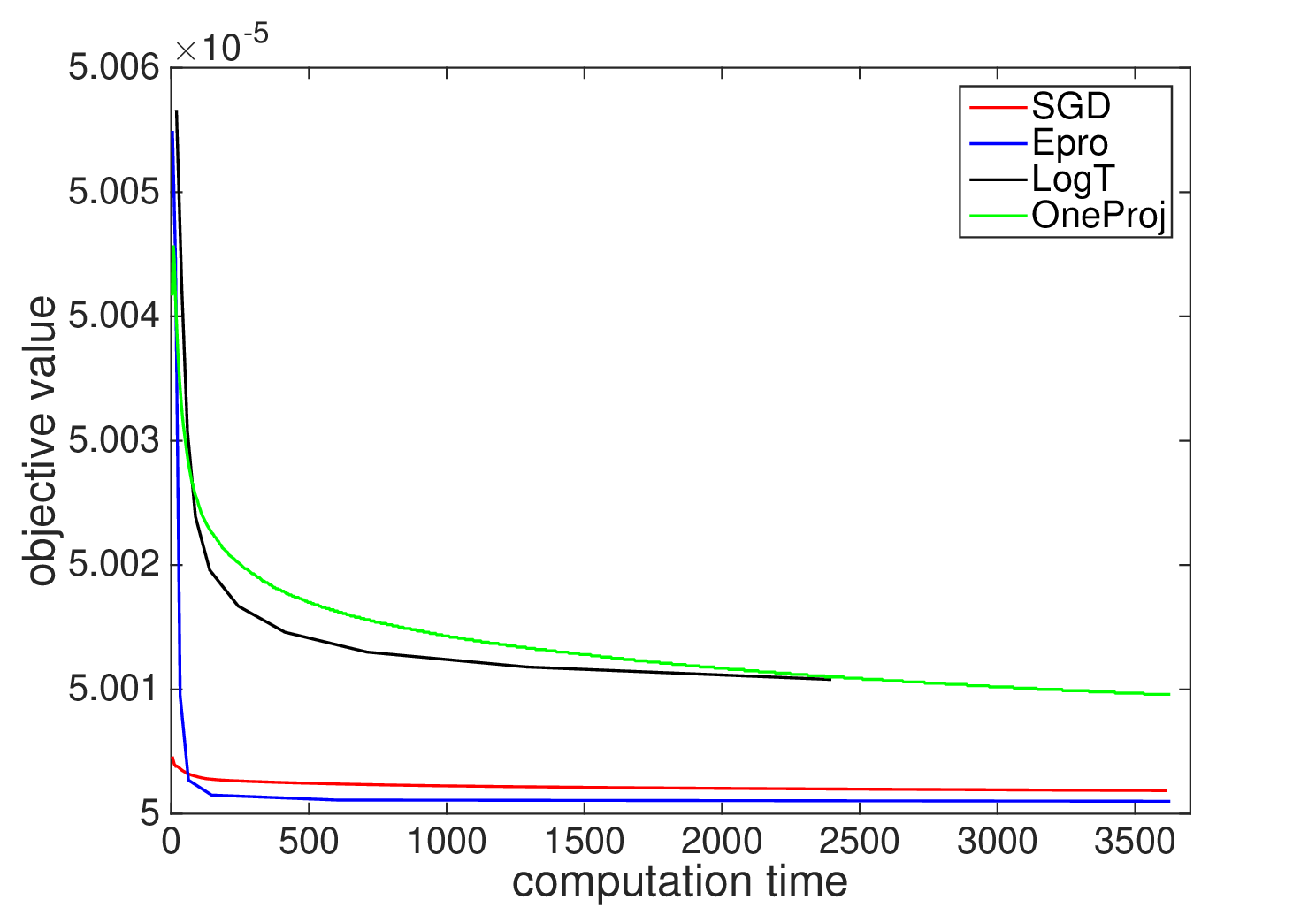}
\caption{Empirical comparison of the four competing methods for solving Eq.~(\ref{eq:lmnn-sparse}). (1) Left plot: the change of the objective values with respect to the iteration number. (2) Right plot: the change of the objective value with respect to the computation time.}
\label{fig:lmnn}       
\end{figure}

The experimental results are presented in Figure~\ref{fig:lmnn}. Similarly in the left plot, we illustrate how the objective value changes with respect to the iteration number. For the LMNN formulation in Eq.~(\ref{eq:lmnn-sparse}), we can observe that Epro converges empirically much faster than all three competing algorithms; in particular, Epro converges after $1024$ iterations, while the other $3$ algorithms need more iterations. In the right plot, we illustrate how the objective value changes with respect to the computation time. We can observe that Epro converges using a smaller amount of computation time. specifically, in our experiment Epro converges with the computation time as $3622$ seconds; while the other three competing algorithms need much more computation time.

\section{CONCLUSIONS}
We proposed an epoch-projection based SGD method, called Epro-SGD, for stochastic strongly convex optimization. The proposed Epro-SGD applies SGD on each iteration within its epochs and only performs a projection at the end of each epoch. Our analysis shows that Epro-SGD requires only a logarithmic number of projections, while achieves a guaranteed optimal rate of convergence both in expectation as well as with high probability. Additionally we proposed a variant of Epro-SGD based on an existing dual averaging method, called Epro-ORDA, which exploit structures of the optimization problems by incorporating an associated proximal mapping iteratively. For illustration, we applied the proposed Epro-SGD method for solving a large margin distance metric learning formulation and a constrained Lasso formulation respectively with a positive definite constraint. Our empirical results demonstrate the effectiveness of the proposed method. 

\section*{Acknowledgements}
\vspace*{-0.1in}
The authors would like to thank the anonymous reviewers for their helpful  comments. T. Yang was supported in part by NSF (1463988, 1545995).

{\small
\bibliographystyle{style/spmpsci}      
\bibliography{EpoSGD.bib}

\begin{thebibliography}{10}
\providecommand{\url}[1]{{#1}}
\providecommand{\urlprefix}{URL }
\expandafter\ifx\csname urlstyle\endcsname\relax
  \providecommand{\doi}[1]{DOI~\discretionary{}{}{}#1}\else
  \providecommand{\doi}{DOI~\discretionary{}{}{}\begingroup
  \urlstyle{rm}\Url}\fi

\bibitem{DBLP:conf/nips/BachM11}
Bach, F., Moulines, E.: Non-asymptotic analysis of stochastic approximation
  algorithms for machine learning.
\newblock In: NIPS (2011)

\bibitem{Beck:2009:FIS:1658360.1658364}
Beck, A., Teboulle, M.: A fast iterative shrinkage-thresholding algorithm for
  linear inverse problems.
\newblock SIAM Journal on Imaging Sciences \textbf{2}(1), 183--202 (2009)

\bibitem{citeulike:9912332}
Bottou, L.: Large-scale machine learning with stochastic gradient descent.
\newblock In: COMPSTAT (2010)

\bibitem{DBLP:conf/ac/BoucheronLB03}
Boucheron, S., Lugosi, G., Bousquet, O.: Concentration inequalities.
\newblock Advanced Lectures on Machine Learning pp. 208--240 (2004)

\bibitem{Boyd:2004:CO:993483}
Boyd, S., Vandenberghe, L.: Convex Optimization.
\newblock Cambridge University Press (2004)

\bibitem{NIPS2012_4543}
Chen, X., Lin, Q., Pena, J.: Optimal regularized dual averaging methods for
  stochastic optimization.
\newblock In: NIPS (2012)

\bibitem{DBLP:journals/talg/Clarkson10}
Clarkson, K.L.: Coresets, sparse greedy approximation, and the frank-wolfe
  algorithm.
\newblock ACM Transactions on Algorithms \textbf{6}(4) (2010)

\bibitem{duchi-2009-efficient}
Duchi, J., Singer, Y.: Efficient online and batch learning using forward
  backward splitting.
\newblock Journal of Machine Learning Research \textbf{10}, 2899--2934 (2009)

\bibitem{DBLP:conf/colt/DuchiSST10}
Duchi, J.C., Shalev-Shwartz, S., Singer, Y., Tewari, A.: Composite objective
  mirror descent.
\newblock In: COLT (2010)

\bibitem{DBLP:conf/nips/DuchiS09}
Duchi, J.C., Singer, Y.: Efficient learning using forward-backward splitting.
\newblock In: NIPS (2009)

\bibitem{frank56}
Frank, M., Wolfe, P.: An algorithm for quadratic programming. naval research
  logistics.
\newblock Naval Research Logistics \textbf{3}(1-2), 95--110 (1956)

\bibitem{arXiv:1301.4666}
Garber, D., Hazan, E.: A linearly convergent conditional gradient algorithm
  with applications to online and stochastic optimization.
\newblock ArXiv:1301.4666 math.LG  (2013)

\bibitem{DBLP:journals/mp/GilpinPS12}
Gilpin, A., Pe{\~{n}}a, J., Sandholm, T.: First-order algorithm with
  log(1/epsilon) convergence for epsilon-equilibrium in two-person zero-sum
  games.
\newblock Math. Program. \textbf{133}(1-2), 279--298 (2012)

\bibitem{Hazan:2008:SAS:1792918.1792945}
Hazan, E.: Sparse approximate solutions to semidefinite programs.
\newblock In: LATIN, pp. 306--316 (2008)

\bibitem{DBLP:journals/jmlr/HazanK11a}
Hazan, E., Kale, S.: Beyond the regret minimization barrier: an optimal
  algorithm for stochastic strongly-convex optimization.
\newblock Journal of Machine Learning Research - Proceedings Track \textbf{19},
  421--436 (2011)

\bibitem{Hazan-free}
Hazan, E., Kale, S.: Projection-free online learning.
\newblock In: ICML (2012)

\bibitem{thesisMJ}
Jaggi, M.: Sparse convex optimization methods for machine learning.
\newblock Ph.D. thesis, ETH Zurich (2011).
\newblock \doi{10.3929/ethz-a-007050453}

\bibitem{citeulike:11703925}
Jaggi, M.: Revisiting frank-wolfe: Projection-free sparse convex optimization.
\newblock In: ICML (2013)

\bibitem{conf/aaai/LiuTTL10}
Liu, W., Tian, X., Tao, D., Liu, J.: Constrained metric learning via distance
  gap maximization.
\newblock In: AAAI (2010)

\bibitem{DBLP:conf/nips/MahdaviYJZY12}
Mahdavi, M., Yang, T., Jin, R., Zhu, S., Yi, J.: Stochastic gradient descent
  with only one projection.
\newblock In: NIPS (2012)

\bibitem{Nemirovski:2009:RSA:1654243.1654247}
Nemirovski, A., Juditsky, A., Lan, G., Shapiro, A.: Robust stochastic
  approximation approach to stochastic programming.
\newblock SIAM Journal on Optimization \textbf{19}(4), 1574--1609 (2009)

\bibitem{RePEc:cor:louvco:2007076}
Nesterov, Y.: Gradient methods for minimizing composite objective function.
\newblock Mathematical Programming \textbf{140}(1), 125--161 (2013)

\bibitem{Nesterov-opt-course}
Nesterov, Y.: Introductory lectures on convex optimization: A basic course.
\newblock Springer Science \& Business Media (2013)

\bibitem{CoraData}
Prithviraj, S., Galileo, N., Mustafa, B., Lise, G.: Collective classification
  in network data.
\newblock AI Magazine \textbf{29}(3) (2008)

\bibitem{Qi:2009:ESM:1553374.1553482}
Qi, G.J., Tang, J., Zha, Z.J., Chua, T.S., Zhang, H.J.: An efficient sparse
  metric learning in high-dimensional space via l1-penalized log-determinant
  regularization.
\newblock In: ICML (2009)

\bibitem{conf/icml/RakhlinSS12}
Rakhlin, A., Shamir, O., Sridharan, K.: Making gradient descent optimal for
  strongly convex stochastic optimization.
\newblock In: ICML (2012)

\bibitem{Shalev-Shwartz:2007:PPE:1273496.1273598}
Shalev-Shwartz, S., Singer, Y., Srebro, N.: Pegasos: Primal estimated
  sub-gradient solver for svm.
\newblock In: ICML (2007)

\bibitem{Algebra-Data}
Stamper, J., Niculescu-Mizil, A., Ritter, S., Gordon, G., Koedinger, K.:
  Algebra {I} 2008-2009.
\newblock Challenge data set from KDD Cup 2010 Educational Data Mining
  Challenge. Find it at http://pslcdatashop.web.cmu.edu/KDDCup/downloads.jsp
  (2010)

\bibitem{lasso}
Tibshirani, R.: Regression shrinkage and selection via the lasso.
\newblock Journal of the Royal Statistical Society: Series B \textbf{58}(1),
  267--288 (1996)

\bibitem{weinberger2009}
Weinberger, K.Q., Saul, L.K.: Distance metric learning for large margin nearest
  neighbor classification.
\newblock Journal of Machine Learning Research \textbf{10}, 207--244 (2009)

\bibitem{DBLP:journals/corr/arXiv:1512.03107}
Yang, T., Lin, Q.: Rsg: Beating sgd without smoothness and/or strong convexity.
\newblock CoRR \textbf{abs/1512.03107} (2016)

\bibitem{Ying:2012:DML:2188385.2188386}
Ying, Y., Li, P.: Distance metric learning with eigenvalue optimization.
\newblock Journal of Machine Learning Research \textbf{13}(1), 1--26 (2012)

\bibitem{logTSGD}
Zhang, L., Yang, T., Jin, R., He, X.: O(log{T}) projections for stochastic
  optimization of smooth and strongly convex functions.
\newblock In: ICML (2013)

\end{thebibliography}
}

\newpage

\section*{SUPPLEMENTAL MATERIALS}

\begin{lem}~\cite{DBLP:journals/jmlr/HazanK11a} \label{lem:3}
If $f(\x)$ is $\beta$-strongly convex and $\x_*$ denotes the optimal solution to $\min_{\x\in\D}f(\x)$. 
For any $\x\in\D$, we have $f(\x)- f(\x_*)\leq 2 G_1^2 / \beta$. 
\end{lem}
\begin{proof}
From Assumption A1, we have $\|\partial f(\x)\|_2\leq G_1$. Hence 
\[
f(\x) - f(\x_*)\leq G_1 \|\x - \x_*\|_2.
\]
Moreover from the strong convexity in $f(\cdot)$ we have
\[
f(\x) - f(\x_*)\geq \frac{\beta}{2}\|\x - \x_*\|_2^2.
\]
From the two inequalities above, we can easily verify that
\[
\|\x - \x_*\|_2 \leq \frac{2G_1}{\beta}, \,\,\, f(\x) - f(\x_*)\leq \frac{2G_1^2}{\beta}. 
\]
This completes the proof.
\end{proof}

\subsection*{Proof of Theorem~\ref{cor:1}}
The proof of Theorem~\ref{cor:1} is based on an important result, as summarized in Lemma~\ref{lem:martingle}.
\begin{lem}\cite{DBLP:conf/nips/MahdaviYJZY12}\label{lem:martingle}
Assume $\|\x_* - \x_t\|_2\leq D$ for all $t$.  Define $D_T= \sum_{t=1}^{T} \|\x_t - \x\|_2^2$ and $\Lambda_T =\sum_{t=1}^T\zeta_t(\x)$. We have
\begin{eqnarray*}
 \Pr\left( \Lambda_T \leq 4G_1\sqrt{D_T\ln\frac{m}{\epsilon}} + 2G_1D\ln\frac{m}{\epsilon}\right) \\
  + \Pr\left( D_T \leq \frac{D^2}{T} \right) \geq 1 - \epsilon,
\end{eqnarray*}
where $m = \lceil2\log_2 T \rceil$ and $\sum_{t=1}^T\zeta_t(\x)=\sum_{t=1}^{T} (\nabla f(\x_t) - \g(\x_t))^{\top}(\x-\x_t)$.
\end{lem}
{\noindent \it Proof of Theorem~\ref{cor:1}} 
The proof below follows from techniques used in Lemma~\ref{lem:1} and Theorem~\ref{thm:1}. Since $F(\x)$ is $\beta$-strongly convex, we have
\[
F(\x_t) - F(\x) \leq (\x_t - \x)^{\top}\nabla F(\x_t) - \frac{\beta}{2}\|\x- \x_t\|_2^2.
\]
Combining the above inequality with  the inequality in~(\ref{eqn:basic}) and taking summation over all $t=1,\ldots, T$, we have
\begin{eqnarray}\label{eqn:high}
\sum_{t=1}^T(F(\x_t) - F(\x)) & \leq & \underbrace{\frac{\|\x_1 - \x\|_2^2}{2\eta} + \eta T(G_1^2 + \lambda^2G_2^2)}\limits_{BT}  \nonumber \\
                                             &        & + \sum_{t=1}^T\zeta_t(\x) - \frac{\beta}{2}D_T.
\end{eqnarray}
We substitute the bound in Lemma~\ref{lem:martingle} into the above inequality with $\x=\x^*$. We consider two cases. In the first case, we assume $D_T\leq D^2/T$. As a result, we have
\begin{eqnarray*}
\sum_{t=1}^T\zeta_t(\x^*) & = & \sum_{t=1}^{T} (\nabla f(\x_t) - \g(\x_t))^{\top}(\x^*-\x_t) \\
                                         & \leq & 2G_1\sqrt{TD_T} \leq 2G_1D,
\end{eqnarray*}
which together with the inequality in~(\ref{eqn:high}) leads to the bound
\[
    \sum_{t=1}^T (F(\x_t) - F(\x^*))\leq 2G_1D + BT.
\]
In the second case, we assume
\begin{eqnarray*}
 \sum_{t=1}^T\zeta_t(\x^*) & \leq & 4G_1\sqrt{D_T\ln\frac{m}{\epsilon}} + 4G_1\ln\frac{m}{\epsilon} \\ 
                                          & \leq & \frac{\beta}{2}D_T + \left(\frac{8G_1^2}{\beta} + 4G_1\right)\ln\frac{m}{\epsilon},
\end{eqnarray*}
where the last step uses the fact $2\sqrt{ab} \leq a^2 + b^2$. We thus have
\[
 \sum_{t=1}^T(F(\x_t) - F(\x^*)) \leq   \left(\frac{8G_1^2}{\beta} + 2G_1D\right)\ln\frac{m}{\epsilon} +  BT
\]
Combing the results of the two cases, we have, with a probability $1 - \epsilon$,
\begin{eqnarray*}
\sum_{t=1}^T (F(\x_t) - F(\x^*)) & \leq &  \left(\frac{8G_1^2}{\beta} + 2G_1D\right)\ln\frac{m}{\epsilon} \\
                                                 &        & + 2G_1D + BT,
\end{eqnarray*}
where $C =  \left(\frac{8G_1^2}{\beta} + 2G_1D\right)\ln\frac{m}{\epsilon}  + 2G_1D$. 
Following the same analysis, we have
\begin{align*}
f(\xt_T) - f(\x_*)\leq \frac{\mu C}{T} + \frac{\mu\|\x_1 - \x_*\|_2^2}{2\eta T} + \mu\eta G^2
\end{align*}
Let $\Delta_k = f(\x^1_k) - f(\x_*)$. By  induction, we have
\begin{align*}
\Delta_{k+1}\leq \frac{\mu C}{T_k} + \frac{\mu \Delta_k}{2\eta_k T_k \beta} + \mu \eta_k G^2
\end{align*}
Assume $\Delta_k\leq V_k\triangleq \frac{\mu^2G^2}{2^{k-2}\beta}$,  by plugging the values of $\eta_k, T_k$, we have
\begin{align*}
\Delta_{k+1}\leq \frac{V_k}{6} + \frac{V_k}{6} + \frac{V_k}{6}= \frac{V_k}{2} = V_{k+1}
\end{align*}
where we use $T_1\geq \max\left(\frac{3C\beta}{\mu G^2}, 9\right)$ and $T_k \geq \max\left(\frac{6\mu c}{V_k}, \frac{18\mu^2G^2}{V_k\beta}\right)$ and $\eta_k = \frac{V_k}{6\mu G^2} = \frac{2\mu}{ 2^k(3\beta)}$. This completes the proof of this theorem.

\subsection*{Proof of Lemma~\ref{lem:prix}}
To prove Lemma~\ref{lem:prix}, we derive an inequality similar to~Eq.~(\ref{eqn:key}); the rest proof of Lemma~\ref{lem:prix} is similar to that of Lemma~\ref{lem:1}. 
\begin{cor} \label{cor:000}
Given a $\beta$-strongly convex function $\widehat f(\x) =f(\x) + g(\x)$, and a sequence $\{\x_t\}$ defined by the update $
\x_{t+1} = \min_{\x}\frac{1}{2}\|\x-(\x_t -\eta\g(\x_t) ) \|_2^2 + \eta g(\x)$. Then for any $\x$, we have
\begin{eqnarray*}
&        &  \sum_{t=1}^T\left[f(\x_t) + g(\x_{t+1}) - f(\x) - g(\x)\right]  \\
& \leq  & \frac{\|\x-\x_1\|_2^2}{2\eta} + \frac{\eta}{2}\sum_{t=1}^T\|\g(\x_t)\|_2^2  
+ \sum_{t=1}^T(\x-\x_t)^{\top}(\g(\x_t) \\ 
&         & - \nabla f(\x_t)) 
 - \frac{\beta}{2}\sum_{t=1}^T\|\x-\x_{t+1}\|_2^2. 
\end{eqnarray*}
\end{cor}
Corollary~\ref{cor:000} can be proved using techniques similar to the ones in~\cite{DBLP:conf/colt/DuchiSST10} but with extra care on the stochastic gradient.  
As a consequence we have
\begin{eqnarray*}
&        & \frac{1}{T}\E\left[\sum_{t=1}^T\hat f(\x_t)- \hat f(\x)\right] \\
& \leq & \frac{\E[\|\x-\x_1\|_2^2]}{2\eta T}  + \eta( G_1^2 + \lambda G_2^2) + \frac{g(\x_{1}) -g(\x_{T+1})}{T}
\end{eqnarray*}

\subsection*{Proof of Lemma~\ref{lem:ORDA}}
The lemma is a corollary of results in~\cite{NIPS2012_4543} for general convex optimization. In particular, if we consider the stochastic composite optimization 
\begin{align*}
F(\x) = \phi(\x) + g(\x)
\end{align*}
where $g(\x)$ is a simple function such that its proximal mapping can be easily solved and $\phi(\x)$ is only accessible through a stochastic oracle that returns a stochastic subgradient $\g(\x)$. To state the convergence of ORDA for general convex problems, \cite{NIPS2012_4543} makes the following assumptions: (i) $\E[\|\g(\x) - \E\g(\x)\|^2_2]\leq \sigma^2$  and (ii) 
\[
\phi(\y) - \phi(\x) - (\y - \x)^{\top}\partial \phi(\x)\leq M\|\y - \x\|_2
\]
When $\|\partial \phi(\x)\|_2\leq G$, the first inequality holds  $\sigma=G$ and the second inequality holds  with $M=2G$. Applying to the augmented objective 
\begin{align*}
F(\x) = f(\x) + \lambda [c(\x)]_+ + g(\x)
\end{align*}
We note that $\sigma = G_1$ and $M=2(G_1 + \lambda G_2)$. Follow the inequality (26) in the appendix of \cite{NIPS2012_4543}, we obtain that 
\[
\E[F(\x_{T+2}) - F(\x_*)]\leq \frac{4\|\x_1 - \x_*\|_2^2}{\eta \sqrt{T}} + \frac{2\eta(\sigma + M)^2}{\sqrt{T}}
\]
by using the Euclidean distance $V(\x, \y) = \frac{1}{2}\|\x - \y\|_2^2$ and their notation $\tau=1$, and noting that $\eta$ is the inverse of their notation $c$. Then the second inequality is Lemma~\ref{lem:ORDA} can be proved similarly as for Lemma~\ref{lem:1}. 

\subsection*{Proof of Theorem~\ref{thm:2}}
\begin{proof}
Recall $\mu = \rho/(\rho - G_1/\lambda)$ and $G = 3G_1 + 2\lambda G_2$. Let $V_k = \left( \mu^2 G^2\right) / \left( 2^{k-2}\beta \right)$. By the values of $\eta_k$ and $T_k$ we have
\begin{equation*}
T_k   = 2^{k+3}= \frac{32\mu^2G^2}{V_k\beta} , 
\eta_k  = \frac{\mu}{2^{(k-1)/2}\beta}
=\frac{V_k \sqrt{T_k}}{8\mu G^2}. 
\end{equation*} 
Define $\Delta_k = \hat f(\x^k_1) - \hat f(\x_*)$. We first prove the inequality
\begin{align*}
\E[\Delta_k]\leq V_k
\end{align*}
by induction. It is true for $k=1$ because of Lemma~\ref{lem:3}, $\mu>1$ and $G^2>G_1^2$. Now assume it is true for $k$ and we prove it for $k+1$. For a random variable $X$ measurable with respect to the randomness up to epoch $k+1$. Let $\E_k[X]$ denote the expectation conditioned on all the randomness up to epoch $k$. Following Lemma~\ref{lem:1}, we have
\begin{eqnarray}
\E_k[\Delta_{k+1}] \leq \mu \left[\frac{2\eta_k G^2}{\sqrt{T_k}}+ \frac{\E[4\|\x^k_1 - \x_*\|_2^2]}{\eta_k \sqrt{T_k}}\right]
\end{eqnarray}
Since $\Delta_k = f(\x^k_1) - f(\x_*)\geq \beta\|\x_1^k-\x_*\|_2^2/2$ by the strong convexity, we have
\begin{eqnarray*}
\E[\Delta_{k+1}] & \leq & \mu \left[\frac{2\eta_k G^2}{\sqrt{T_k}}+ \frac{\E[8\Delta_k]}{\eta_k \sqrt{T_k}\beta}\right] \nonumber \\
& = & \frac{2\eta_k \mu G^2}{\sqrt{T_k}} + \frac{V_k\mu}{\eta_k \sqrt{T_k}\beta} = \frac{V_k}{4} + \frac{V_k}{4} = \frac{V_k}{2}
\end{eqnarray*}
where we use the fact $\eta_k/\sqrt{T_k} = V_k /(8\mu G^2) $ and $T_k  = 32\mu^2 G^2/(V_k\beta) $. Thus,  we get
\[
\E[f(\x_1^{k^\dagger+1})] -f(\x_*) = \E[\Delta_{k^\dagger+1}] \leq V_{k^\dagger+1} = \frac{\mu^2 G^2}{2^{k^\dagger-1}\beta}
\]
Note that the total number of epochs satisfies
\[
\sum_{k=1}^{k^\dagger}(T_k+1) = 16(2^{k^\dagger}-1) + k^\dagger\leq T
\]
By some reformulations, we complete the proof of this theorem.
\end{proof}

\subsection*{Proof of Lemma~\ref{lem:martingle}}

The proof of Lemma~\ref{lem:martingle} is based on {\it the Bernstein Inequality for Martingales}~\cite{DBLP:conf/ac/BoucheronLB03}. We present its main result below for completeness.
\begin{thm}~[Bernstein Inequality for Martingales]  \label{thm:bernstein} 
Let $X_1, \ldots , X_n$ be a bounded martingale difference sequence with respect to the filtration $\F = (\F_i)_{1\leq i\leq n}$ and with $\|X_i\| \leq K$. Let
\begin{equation*}
S_i = \sum_{j=1}^i X_j
\end{equation*}
be the associated martingale. Denote the sum of the conditional variances by
\begin{equation*}
    \Sigma_n^2 = \sum_{t=1}^n \E\left[X_t^2|\F_{t-1}\right],
\end{equation*}
Then for all constants $t$, $\nu > 0$,
\begin{equation*}
\Pr\left[ \max\limits_{i=1, \ldots, n} S_i > t \mbox{ and } \Sigma_n^2 \leq \nu \right] \leq \exp\left(-\frac{t^2}{2(\nu + Kt/3)} \right),
\end{equation*}
and therefore,
\begin{equation*}
    \Pr\left[ \max\limits_{i=1,\ldots, n} S_i > \sqrt{2\nu t} + \frac{\sqrt{2}}{3}Kt \mbox{ and } \Sigma_n^2 \leq \nu \right] \leq e^{-t}.
\end{equation*}
\end{thm}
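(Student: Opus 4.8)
The plan is to prove this by the exponential-supermartingale (Chernoff) method, the standard route to Freedman/Bernstein-type martingale inequalities. Fix $\lambda > 0$ and consider the process
\[
M_i = \exp\left(\lambda S_i - g(\lambda)\sum_{t=1}^i \E[X_t^2\mid\F_{t-1}]\right), \qquad M_0 = 1,
\]
where $g(\lambda) = (e^{\lambda K} - 1 - \lambda K)/K^2$. I would first establish that $(M_i)$ is a supermartingale with respect to $\F$, and then extract a tail bound on $\max_i S_i$ from Doob's maximal inequality applied to $(M_i)$, optimizing over $\lambda$ at the end.

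The crux is the conditional one-step moment bound: for a martingale difference $X$ with $|X|\le K$ and $\E[X\mid\F_{t-1}] = 0$,
\[
\E[e^{\lambda X}\mid\F_{t-1}] \le \exp\left(g(\lambda)\,\E[X^2\mid\F_{t-1}]\right).
\]
This follows because $y\mapsto (e^y - 1 - y)/y^2$ is nondecreasing on $\R$, so $e^{\lambda X} - 1 - \lambda X \le X^2 g(\lambda)$ pointwise (using $\lambda X \le \lambda K$); taking conditional expectation annihilates the linear term, and $1 + u \le e^u$ supplies the exponential form. Writing $\Sigma_i^2 = \sum_{t=1}^i\E[X_t^2\mid\F_{t-1}]$, this estimate gives $\E[M_i\mid\F_{i-1}] \le M_{i-1}$, so $(M_i)$ is a nonnegative supermartingale with $\E[M_i]\le 1$.

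For the maximal step I would introduce the first-passage time $\tau = \min\{i: S_i > t\}$. On the event $\{\max_{i\le n} S_i > t \text{ and }\Sigma_n^2 \le \nu\}$ we have $\tau \le n$, and since the compensator $\Sigma_i^2$ is nondecreasing it satisfies $\Sigma_\tau^2 \le \Sigma_n^2 \le \nu$; hence $M_\tau \ge \exp(\lambda t - g(\lambda)\nu)$ there. Combining this event inclusion with the supermartingale maximal (or optional-stopping) inequality yields
\[
\Pr\left[\max_{i\le n} S_i > t,\ \Sigma_n^2 \le \nu\right] \le \exp\left(g(\lambda)\nu - \lambda t\right).
\]
To reach the stated form I would use the elementary bound $g(\lambda) \le \frac{\lambda^2/2}{1 - \lambda K/3}$, valid for $0 \le \lambda K < 3$, and set $\lambda = t/(\nu + Kt/3)$, which collapses the exponent to $-t^2/(2(\nu + Kt/3))$, giving the first display. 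The second inequality follows by choosing the deviation threshold $a$ so that $a^2/(2(\nu + Ka/3)) \ge t$; solving the resulting quadratic in $a$ and bounding its positive root via $\sqrt{p+q}\le\sqrt p + \sqrt q$ produces a threshold of the form $\sqrt{2\nu t} + cKt$ together with the bound $e^{-t}$.

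The main obstacle is the coupling between the maximum over $i$ and the condition $\Sigma_n^2 \le \nu$, which constrains a global random quantity rather than a deterministic one. The first-passage device resolves this: because the variance compensator is monotone, its value at $\tau$ is dominated by its terminal value, and the supermartingale property is preserved under optional stopping, so the maximal inequality applies without requiring $\Sigma_n^2$ to be bounded pointwise. The remaining ingredients—monotonicity of $(e^y-1-y)/y^2$, the bound on $g(\lambda)$, and the final quadratic optimization—are routine calculus.
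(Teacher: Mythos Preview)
The paper does not prove this theorem at all: it is quoted verbatim from \cite{DBLP:conf/ac/BoucheronLB03} and used as a black box in the proof of Lemma~\ref{lem:4}. So there is no ``paper's own proof'' to compare against.

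That said, your sketch is the standard Freedman-style derivation and is essentially correct. The exponential supermartingale $M_i=\exp(\lambda S_i-g(\lambda)\Sigma_i^2)$ with $g(\lambda)=(e^{\lambda K}-1-\lambda K)/K^2$, the one-step bound via monotonicity of $(e^y-1-y)/y^2$, the stopping-time trick to handle the joint event $\{\max_i S_i>t,\ \Sigma_n^2\le\nu\}$, and the analytic estimate $g(\lambda)\le \tfrac{\lambda^2}{2(1-\lambda K/3)}$ followed by the choice $\lambda=t/(\nu+Kt/3)$ are exactly the ingredients one finds in Freedman's original paper and in the Boucheron--Lugosi--Bousquet survey cited here. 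One small caveat: when you pass from the first display to the second by setting the deviation level $a=\sqrt{2\nu t}+\tfrac{\sqrt{2}}{3}Kt$, a direct substitution does not quite give $a^2/(2(\nu+Ka/3))\ge t$ with that exact constant; the clean route is to solve the quadratic $a^2-\tfrac{2Kt}{3}a-2\nu t\ge 0$ and bound the positive root, which yields $a\le \tfrac{Kt}{3}+\sqrt{(Kt/3)^2+2\nu t}\le \sqrt{2\nu t}+\tfrac{2}{3}Kt$. The form stated in the theorem (with $\sqrt{2}/3$) comes out of a slightly different bookkeeping in the cited reference, but this is a cosmetic discrepancy and does not affect any downstream use in the paper.
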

{\noindent \it Proof of Lemma~\ref{lem:martingle}.}
Define martingale difference $X_t = (\x- \x_t)^{\top}(\nabla f(\x_t) - \g(\x_t))$ and martingale $\Lambda_T= \sum_{t=1}^{T} X_t$. Define the conditional variance $\Sigma_T^2$ as
\[
    \Sigma_T^2 = \sum_{t=1}^{T} \E_{\xi_t}\left[X_t^2 \right] \leq 4G_1^2\sum_{t=1}^{T} \|\x_t - \x\|_2^2 = 4G_1^2D_T.
\]
Define $K = 2G_1D$. Thus, $\|X_t\|_2\leq K$. We have
\begin{eqnarray*}
&                            & \hskip -0.15in \Pr\left(\Lambda_T\geq 2\sqrt{4G_1^2D_T \tau} + \sqrt{2}K\tau/3\right) \\
& \hskip -0.2in =    & \hskip -0.15in \Pr\left(\Lambda_T\geq 2\sqrt{4G_1^2D_T \tau} + \sqrt{2}K\tau/3, \Sigma_T^2 \leq 4G_1^2D_T\right) \\
& \hskip -0.2in =    & \hskip -0.15in \Pr\left(\Lambda_T\geq 2\sqrt{4G_1^2D_T\tau} + \sqrt{2}K\tau/3, \Sigma_T^2 \leq 4G_1^2D_T, \right. \\ 
&                            & \hskip -0.15in \left. D_T \leq \frac{D^2}{T} \right) + \sum_{i=1}^m \Pr\left(\Lambda_T \geq 2\sqrt{4G_1^2 D_T \tau} + \right. \\
&                            & \hskip -0.15in \left. \sqrt{2}K\tau/3, \Sigma_T^2 \leq 4G_1^2D_T, \frac{D^2}{T}2^{i-1} < D_T  \leq \frac{D^2}{T} 2^{i} \right) \\
& \hskip -0.2in \leq & \hskip -0.15in \Pr\left(D_T \leq \frac{D^2}{T}\right) + \sum_{i=1}^m \Pr\left(\Lambda_T \geq \sqrt{2\times 4G_1^2\frac{D^2}{T} 2^{i}\tau} \right. \\
&                             & \hskip -0.15in + \left.  \sqrt{2}K\tau/3, \Sigma_T^2 \leq 4G_1^2\frac{D^2}{T}2^i\right) \\
& \hskip -0.2in \leq  & \hskip -0.15in \Pr\left(D_T \leq \frac{D^2}{T}\right) + me^{-\tau},
\end{eqnarray*}
where we use the fact $\|\x_t - \x\|_2^2 \leq D^2$ for  all $t$ and $m = \lceil2\log_2 T \rceil$, and the last step follows the Bernstein inequality for martingales. We complete the proof by setting $\tau= \ln(m/\epsilon)$.


\end{document}